\newcommand{\norm}[1]{\|#1\|}
\newcommand{\abs}[1]{|#1|}
\newcommand{\scp}[2]{\langle #1,#2\rangle}
\newcommand{\weakto}{\rightharpoonup}
\newcommand{\grad}{\nabla}
\newcommand{\extRR}{\mathbb{R}_\infty}
\newcommand{\R}{\mathbb{R}}
\newcommand{\multito}{\rightrightarrows}
\DeclareMathOperator{\id}{Id}
\title{An inertial forward-backward algorithm for monotone inclusions}
\author{D. Lorenz\and T. Pock} \institute{D. Lorenz\at Institute for
  Analysis and Algebra, TU Braunschweig, 38092 Braunschweig, Germany,
  \email{d.lorenz@tu-braunschweig.de}\and T. Pock\at Institute for
  Computer Graphics and Vision, Graz University of Technology,
  Inffeldgasse 16, 8010 Graz, Austria, and the Safety \& Security
  Department, AIT Austrian Institute of Technology GmbH,
  Donau-City-Stra{\ss}e 1, 1220 Vienna, Austria,
  \email{pock@icg.tugraz.at}}
\begin{document}
\maketitle

\begin{abstract}
  In this paper, we propose an inertial forward backward splitting
  algorithm to compute a zero of the sum of two monotone operators, with one
  of the two operators being co-coercive. The algorithm is inspired by
  the accelerated gradient method of Nesterov, but can be applied to a
  much larger class of problems including convex-concave saddle point problems and general monotone
  inclusions. We prove convergence of the algorithm in a Hilbert space
  setting and show that several recently proposed first-order methods
  can be obtained as special cases of the general algorithm.
  Numerical results show that the proposed algorithm converges faster
  than existing methods, while keeping the computational cost of each
  iteration basically unchanged.
\end{abstract}
\keywords{convex optimization, forward-backward splitting, monotone inclusions, primal-dual algorithms, saddle-point problems, image restoration}
\section{Introduction}

A fundamental problem is to find a zero of a maximal monotone
operator $T$ in a real Hilbert space $X$:
\begin{equation}\label{eq:monotone-inclusion}
\text{find}\ x \in X :\quad 0\in T(x).
\end{equation}
This problem includes, as special cases, variational inequality
problems, non-smooth convex optimization problems and convex-concave
saddle-point problems. Therefore this problem finds many important
applications in scientific fields such as image processing, computer
vision, machine learning and signal processing.

In case, $T=\nabla f$ is the gradient of a differentiable convex
function $f$, the most simple approach to
solve~\eqref{eq:monotone-inclusion} is to apply for each $k \geq 0$
the following recursion:
\begin{equation*}\label{eq:simple-forward}
  x^{k+1} = (\id - \lambda_k T)(x^k)\;,
\end{equation*}
where the operator $(\id - \lambda_k T)$ is the so-called forward
operator. Note, that the above scheme is nothing else than the
classical method of steepest descend and $\lambda_k >0$ is the step
size parameter that has to be chosen according to a rule that
guarantees convergence of the algorithm.

In case, $T$ is a general monotone operator, the classical algorithm
to solve~\eqref{eq:monotone-inclusion} is the proximal point algorithm
which can be traced back to the early works of Minty~\cite{Minty62}
and Martinet~\cite{Martinet70}. See also the thesis of
Eckstein~\cite{EcksteinPhD} for a detailed treatment of the
subject.

The proximal point algorithm generates a sequence $x^k$ according to
the recursion
\begin{equation}\label{eq:ppa}
x^{k+1} = (\id + \lambda_k T)^{-1}(x^k)\,,
\end{equation}
where $\lambda_k >0$ is a regularization parameter. The operator $(\id
+ \lambda_k T)^{-1}$ is the so-called resolvent operator, that has
been introduced by Moreau in~\cite{moreau65}. In the context of
algorithms, the resolvent operator is often referred to as the
backward operator. In the seminal
paper~\cite{Rockafellar1976_Proximal}, Rockafellar has shown that the
sequence $x^k$ generated by the proximal point algorithm converges
weakly to a point $x^*$ satisfying $0 \in T(x^*)$.

Unfortunately, in many interesting cases, the evaluation of the
resolvent operator is as difficult as solving the original
problem, which limits the practicability of the proximal point
algorithm in its plain form. To partly overcome this problem, it is
shown in~\cite{Rockafellar1976_Proximal}, that the algorithm still
converges when using inexact evaluations of the resolvent operator. In
fact, the evaluation errors have to satisfy a certain summability
condition which essentially means that the resolvent operators have to
be computed with increasing accuracy.  This is still somewhat
limiting, since in practice the errors of the resolvent operator are
often hard to control.

\subsection{Splitting methods}

In many problems, however, the operator $T$ can be written as the sum
of two maximal monotone operators, i.e. $T=A+B$, such that the
resolvent operators $(\id + \lambda A)^{-1}$ and $(\id + \lambda
B)^{-1}$, are much easier to compute than the full resolvent $(\id +
\lambda T)^{-1}$. Then, by combining the resolvents with respect to
$A$, and $B$ in a certain way, one might be able to mimic the effect
of the full proximal step based on $T$. The two most successful
instances that are based on combining forward and backward steps with
respect to $A$ and $B$, are the Peaceman-Rachford splitting
algorithm~\cite{PeacemanRachford55},
\[\label{eq:PR-splitting}
x^{k+1} = (\id + \lambda B)^{-1}(\id - \lambda A)(\id + \lambda
A)^{-1}(\id - \lambda B)(x^k)\;,
\]
and the Douglas-Rachford splitting algorithm~\cite{DouglasRachford56},
\[\label{eq:DR-splitting}
x^{k+1} = (\id + \lambda B)^{-1}[(\id + \lambda A)^{-1}(\id - \lambda
B) + \lambda B](x^k)\;.
\]
These splitting techniques have been originally proposed in the
context of linear operators and therefore cannot be applied to general
monotone operators. In~\cite{LionsMercier79}, Lions and Mer\-cier have
analyzed and further developed these splitting algorithms. Their idea
was to perform a change of variables \mbox{$x^k = (\id + \lambda
B)^{-1}(v^k)$}, such that the Peaceman-Rachford and Douglas-Rachford
splitting algorithms have a meaning even for $A$ and $B$ being
multivalued operators. Regarding convergence of the algorithms, the
Peaceman-Rachford algorithm still needs to assume that $B$ is
single-valued but the Douglas-Rachford algorithm converges even in the
general setting, where $A+B$ is just maximal monotone.

In~\cite{Eckstein1992}, Eckstein has pointed out that the
Douglas-Rach\-ford splitting algorithm can be re-written in the form
of~\eqref{eq:ppa}. Hence, it is basically a certain instance of the
proximal point algorithm. Moreover, Eckstein has shown that the
application of the Douglas-Rachford algorithm to the dual of a certain
structured convex optimization problem coincides with the so-called
alternating direction method of multipliers. It is remarkable, that
the Douglas-Rachford splitting algorithm and its variants have seen a
considerable renaissance in modern convex
optimization~\cite{GoldsteinOsher,Boyd_et_al}. The main reason for the
renewed interest lies in the fact that it is well suited for
distributed convex programming. This is an important aspect for
solving large scale convex optimization problems arising in recent
image processing and machine learning applications.

Another important line of splitting methods is given by the so-called
forward-backward splitting
technique~\cite{Goldstein64,LevitinPolyak1961,Bruck1977,LionsMercier79}. In
contrast to the more complicated splitting techniques discussed above,
the forward-backward scheme is based (as the name suggests) on the
recursive application of an explicit forward step with respect to $B$,
followed by an implicit backward step with respect
to $A$. The forward-backward algorithm is written as:
\begin{equation}\label{eq:fb-splitting}
x^{k+1}= (\id + \lambda_k A)^{-1}(\id - \lambda_k B)(x^k)
\end{equation}
In the most general setting, where both $A$ and $B$ are general
monotone operators, the convergence result is rather
weak~\cite{Passty79}, basically, $\lambda_k$ has to fulfill the same
step-size restrictions as unconstrained subgradient descend
schemes. However, if in addition $B$ is single valued and Lipschitz,
e.g. $B$ is the gradient of a smooth convex function, the situation
becomes much more beneficial. In fact, if $B$ is $L$-Lipschitz, and
$\lambda_k$ is chosen such that $\lambda_k < 2/L$, the forward
backward algorithm~\eqref{eq:fb-splitting} converges to a zero of
$T=A+B$~\cite{Gabay83,Tseng91}. Similar to the Douglas-Rachford
splitting algorithm, the forward-backward algorithm has seen a renewed
interest. It has been proposed and further improved in the context of
sparse signal recovery~\cite{DaubechieISTA,CombettesWajs}, image
processing~\cite{Raguet_et_al}, and machine
learning~\cite{DuchiSinger} applications.

\subsection{Inertial methods}

In~\cite{Polyak64}, Polyak introduced the so-called heavy ball method,
a two-step iterative method for minimizing a smooth convex function
$f$. The algorithm takes the following form:
\[\label{eq:heavy-ball}
\begin{cases}
  y^k = x^k + \alpha_k (x^k - x^{k-1})\\
  x^{k+1} = y^k - \lambda_k \nabla f(x^k)\;,
\end{cases}
\]
where $\alpha_k \in [0,1)$ is an extrapolation factor and $\lambda_k$
is again a step-size parameter that has to be chosen sufficiently
small.  The difference compared to a standard gradient method is that
in each iteration, the extrapolated point $y^k$ is used instead of
$x^k$. It is remarkable that this minor change greatly improves the
performance of the scheme.  In fact, its efficiency
estimate~\cite{Polyak64} on strongly convex functions is equivalent to
the known lower complexity bounds of first-order
methods~\cite{Nesterov} and hence the heavy-ball method resembles an
optimal method. The acceleration is explained by the fact that the new
iterate is given by taking a step which is a combination of the
direction $x^k - x^{k-1}$ and the current anti-gradient direction
$-\nabla f(x^k)$.

The heavy ball method can also be interpreted as an explicit finite
differences discretization of the time dynamical system
\[\label{eq:heavy-ball-cont}
\ddot x(t) + \alpha_1 \dot x(t) + \alpha_2 \nabla f(x(t)) = 0\,,
\]
where $\alpha_{1,2} > 0$ are free model parameters of the
equation. This equation is used to describe the motion of a heavy body
in a potential field $f$ and hence the system is coined the
heavy ball with friction dynamical system.

In~\cite{AlvarezAttouch2001}, Alvarez and Attouch translated the idea
of the heavy ball method to the setting of a general maximal monotone
operator using the framework of the proximal point
algorithm~\eqref{eq:ppa}. The resulting algorithm is called the
inertial proximal point algorithm and it is written as
\begin{equation}\label{eq:inertial-pp}
\begin{cases}
  y^k = x^k + \alpha_k (x^k - x^{k-1})\\
  x^{k+1} = (\id + \lambda_k T)^{-1}(y^k)\;,
\end{cases}
\end{equation}
It is shown that under certain conditions on $\alpha_k$ and
$\lambda_k$, the algorithm converges weakly to a zero of $T$. In fact,
the algorithm converges if $\lambda_k$ is non-decreasing and $\alpha_k
\in [0,1)$ is chosen such that
\begin{equation}\label{eq:law-alpha}
\sum_k \alpha_k \|x^k-x^{k-1}\|^2 < \infty\,,
\end{equation}
which can be achieved by choosing $\alpha_k$ with respect to a simple
on-line rule which ensures summability or in particular it is also
true for $\alpha_k < 1/3$.

In subsequent work~\cite{MoudafiOliny}, Moudafi and Oliny introduced
an additional single-valued and Lipschitz continuous operator $B$ into
the inertial proximal point algorithm:
\begin{equation}\label{eq:inertial-fb}
\begin{cases}
  y^k = x^k + \alpha_k (x^k - x^{k-1})\\
  x^{k+1} = (\id + \lambda_k A)^{-1}(y^k - \lambda_k B(x^k))\;,
\end{cases}
\end{equation}
It turns out that this algorithm converges as long as $\lambda_k <
2/L$, where $L$ is the Lipschitz constant of $B$ and the same
condition~\eqref{eq:law-alpha}, which is used to ensure convergence of
the inertial proximal point algorithm. Note that for $\alpha_k > 0$,
the algorithm does not take the form of a forward-backward splitting
algorithm, since $B$ is still evaluated at the point $x^k$.

In recent work, Pesquet and Pustelnik proposed a Douglas-Rachford type
parallel splitting method for finding the zero of the sum of an
arbitrary number maximal monotone operators. The method also includes
inertial forces~\cite{Pesquet_J_2012_j-pjo_parallel_ipo} which
numerically speeds up the convergence of the algorithm. Related
algorithms also including inertial forces have been proposed and
investigated in~\cite{Bot:arXiv1403.3330,BotCsetnekiADMM}.

\subsection{Optimal methods}

In a seminal paper~\cite{Nesterov83}, Nesterov proposed a modification
of the heavy ball method in order to improve the convergence rate on
smooth convex functions. While the heavy ball method evaluates the
gradient in each iterate at the point $x^k$, the idea of Nesterov was
to use the extrapolated point $y^k$ also for evaluating the
gradient. Additionally, the extrapolation parameter $\alpha_k$ is
computed according to some special law that allows to prove optimal
convergence rates of this scheme. The scheme is given by:
\begin{equation}\label{eq:nesterov-optimal}
\begin{cases}
  y^{k} = x^{k} + \alpha_k(x^{k} - x^{k-1})\\
  x^{k+1} = y^{k} - \lambda_k \nabla f(y^k)\,,
\end{cases}
\end{equation}
where $\lambda_k = 1/L$, There are several choices to define an
optimal sequence
$\{\alpha_k\}$~\cite{Nesterov83,Nesterov,BeckTeboulle2008,tseng2008accelerated}.
In~\cite{Nesterov}, it has been shown that the efficiency estimate of
the above scheme is up to some constant factor equivalent to the lower
complexity bounds of first-order methods for the class of
$\mu$-strongly convex functions, $\mu \geq 0$, with $L$-Lipschitz
gradient.

In~\cite{Gueler1991}, G\"uler has translated Nesterov's idea to the
general setting of the proximal point algorithm, with the restriction
that the operator $T$ is the subdifferential of a convex
function. Inexact versions of this algorithm have been proposed and
studied in~\cite{Villa_etal}. In~\cite{BeckTeboulle2008}, Beck and
Teboulle have proposed the so-called fast iterative shrinkage
thresholding algorithm (FISTA), that combines in a clever way the
ideas of Nesterov and G\"uler within the forward-backward splitting
framework. The algorithm features the same optimal convergence rate as
Nesterov's method but it can be applied also in the presence of an
additional but simple (with easy to compute proximal map) non-smooth
convex function. The FISTA algorithm can be applied to a variety of
practical problem arising in sparse signal recovery, image processing
and machine learning and hence has become a standard
algorithm. Related algorithms with similar properties have been
independently proposed by Nesterov
in~\cite{NesterovSNS,NesterovComposite}.

\subsection{Content}

In this paper we propose a modification of the forward-back\-ward
splitting algorithm~\eqref{eq:fb-splitting} to solve monotone
inclusions.  Our method is inspired by the inertial forward-backward
splitting method~\eqref{eq:inertial-fb}, but differs from this method
in two regards. First, the operator $B$ is evaluated at the inertial
extrapolate $y^k$ which is inspired by Nesterov's optimal gradient
method~\eqref{eq:nesterov-optimal}. In addition, we consider a
symmetric positive definite map $M$, which can be interpreted as a
preconditioner or variable metric and is inspired by recently work on
primal dual
algorithms~\cite{CP2010,Esseretal10,pock2011_precond,HeYuan2012} and
forward backward
splitting~\cite{CombettesVu,ChozenouxPesquetRepetti,ChenRockafellar97}. These
changes allow us to define a new ``meta-algorithm'', that includes, as
special cases for example several convex optimization algorithms that
have recently attracted a lot of attention in the imaging, signal
processing and machine learning communities.

In section~\ref{sec:fb-monotone} we will define the proposed algorithm
and prove the general convergence in a Hilbert space setting.  In
section~\ref{sec:pdfb} we will apply the proposed algorithm to a class
of convex-concave saddle-point problems and will show how several
known algorithms can be recovered from the proposed
``meta-algorithm''. In section~\ref{sec:numerics}, we will apply the
proposed algorithm to image processing problems including, image
restoration and image deconvolution. In the last section, we will give
some concluding remarks.

\section{Proposed algorithm}
\label{sec:fb-monotone}

We consider the problem of finding a point $x^*$ in a Hilbert space $X$
such that
\begin{equation}\label{eq:ab-inclusion}
  0 \in (A+B)(x^*)\,,
\end{equation}
where $A,B$ are maximal monotone operators. We additionally assume
that the operator $B$ is single-valued and co-coercive with respect to
the solution set $S:=(A+B)^{-1}(0)$ and a linear, selfadjoint and positive definite map $L$, i.e. for all $x\in X$, $y\in S$
\begin{equation}
  \label{eq:cocoercive}
  \scp{B(x)-B(y)}{x-y} \geq \norm{B(x)-B(y)}_{L^{-1}}^2
\end{equation}
where, as
usual, we denote $\norm{x}_{L^{-1}}^2 = \scp{L^{-1}x}{x}$. Note that
in the most simple case where $L=l\id$, $l > 0$, the operator $B$
is $1/l$ co-coercive and hence $l$-Lipschitz. However, we will later
see that in some cases, it makes sense to consider more general $L$.

The algorithm we propose in this paper is a basically a modification
of the forward-backward splitting algorithm~\eqref{eq:fb-splitting}.
The scheme is as follows:
\begin{equation}\label{eq:inclusion-iterate}
\begin{cases}
  y^k = x^k + \alpha_k (x^k - x^{k-1})\\
  x^{k+1} = (\id + \lambda_k M^{-1} A)^{-1}(\id - \lambda_k M^{-1} B)(y^k)\;,
\end{cases}
\end{equation}
where $\alpha_k \in [0,1)$ is an extrapolation factor, $\lambda_k$
  is a step-size parameter and $M$ is a linear selfadjoint and positive
  definite map that can be used as a preconditioner for the algorithm
  (cf. Section~\ref{sec:precond}).  Note that the iteration can be
  equivalently expressed as
\begin{equation}\label{eq:inclusion-iterate-var}
\begin{cases}
  y^k = x^k + \alpha_k (x^k - x^{k-1})\\
  x^{k+1} = (M + \lambda_k A)^{-1}(M - \lambda_k B)(y^k)\;,
\end{cases}
\end{equation}
Observe that~\eqref{eq:inclusion-iterate}
(resp.~\eqref{eq:inclusion-iterate-var}) differs from the inertial
forward-backward algorithm of Moudafi and Oliny insofar that we also
evaluate the operator $B$ at the inertial extrapolate $y^k$. This
allows us to rewrite the algorithm in the form of the standard
forward-backward algorithm~\eqref{eq:fb-splitting}.

In the following Theorem, we analyze the basic convergence properties
of the proposed algorithm.
\begin{theorem}
  \label{thm:intertial_fb_convergence}
  Let $X$ be a real Hilbert space and $A,B:X \multito X$ be maximally
  monotone operators. Further assume that $M,L:X\to X$ are linear,
  bounded, selfadjoint and positive definite maps and that $B$ is
  single valued and co-coercive w.r.t. $L^{-1}$
  (cf.~(\ref{eq:cocoercive})). Moreover, let $\lambda_k>0$,
  $\alpha<1$, $\alpha_k\in[0,\alpha]$, $x^0=x^{-1}\in X$ and let the
  sequences $x^k$ and $y^k$ be defined by~(\ref{eq:inclusion-iterate})
  (or~\eqref{eq:inclusion-iterate-var}).  If
  \begin{enumerate}[(i)]
  \item $S_k = M - \tfrac{\lambda_k}{2}L$ is positive definite for all $k$ and
    \label{it:assumption-Sk-pos-def}
  \item $\sum_{k=1}^\infty \alpha_k\norm{x^k-x^{k-1}}_M^2<\infty$  \label{it:assumpion-convergence}
  \end{enumerate}
  then $x^k$ converges weakly to a solution of the inclusion $0\in
  (A+B)(x)$.
\end{theorem}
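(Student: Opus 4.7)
The plan is to verify the two hypotheses of Opial's lemma in the equivalent norm $\norm{\cdot}_M$: that $\norm{x^k-x^*}_M$ has a limit for every $x^*\in S$, and that every weak sequential cluster point of $(x^k)$ lies in $S$. Everything starts from the reformulation
\[
\tfrac{1}{\lambda_k}M(y^k - x^{k+1}) - B(y^k) \in A(x^{k+1}),
\]
to be combined with the optimality relation $-B(x^*)\in A(x^*)$ at any $x^*\in S$.

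The core of the proof is a quasi-Fej\'er inequality. Monotonicity of $A$ applied to the two inclusions above yields $\scp{M(y^k-x^{k+1})}{x^{k+1}-x^*} \geq \lambda_k\scp{B(y^k)-B(x^*)}{x^{k+1}-x^*}$. I would split the right-hand side as $x^{k+1}-x^* = (y^k-x^*) + (x^{k+1}-y^k)$, bound the first piece by co-coercivity of $B$ in the $L^{-1}$-norm and the second by Young's inequality in the $L^{-1}/L$ duality with the critical parameter $t=2$, so that the $\norm{B(y^k)-B(x^*)}_{L^{-1}}^2$ contributions cancel exactly. After the polarization identity on the left, the combination $M-(\lambda_k/2)L=S_k$ emerges and one obtains
\[
\norm{x^{k+1}-x^*}_M^2 \leq \norm{y^k-x^*}_M^2 - \norm{y^k-x^{k+1}}_{S_k}^2,
\]
which is where hypothesis (i) enters. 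Expanding $\norm{y^k-x^*}_M^2$ with $y^k=x^k+\alpha_k(x^k-x^{k-1})$ via the three-point identity and writing $\phi_k:=\norm{x^k-x^*}_M^2$ converts this into the Alvarez--Attouch-type recurrence
\[
\phi_{k+1}-\phi_k \leq \alpha_k(\phi_k-\phi_{k-1}) + \alpha_k(1+\alpha_k)\norm{x^k-x^{k-1}}_M^2 - \norm{y^k-x^{k+1}}_{S_k}^2.
\]
Setting $t_k=\phi_k-\phi_{k-1}$ and taking positive parts, the uniform bound $\alpha_k\leq\alpha<1$ together with (ii) gives $[t_{k+1}]_+ \leq \alpha [t_k]_+ + \delta_k$ with $\sum\delta_k<\infty$, hence $\sum[t_k]_+<\infty$ by geometric summation; since $\phi_k\geq 0$ also $\sum[t_k]_-<\infty$, so $\phi_k$ converges. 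Telescoping the recurrence simultaneously delivers $\sum_k \norm{y^k-x^{k+1}}_{S_k}^2 < \infty$.

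For the second Opial condition, the vanishing $\norm{y^k-x^{k+1}}\to 0$ (via the uniform positive lower bound on the spectrum of $S_k$) combined with $x^{k+1}-x^k = \alpha_k(x^k-x^{k-1}) - (y^k-x^{k+1})$ and $\alpha_k\leq\alpha<1$ forces $\norm{x^k-x^{k-1}}\to 0$ by iterating a geometric contraction, whence $y^k-x^k$ and $x^{k+1}-x^k$ vanish strongly and weak cluster points of $(x^k)$ and $(x^{k+1})$ coincide. Setting
\[
v^k := \tfrac{1}{\lambda_k}M(y^k-x^{k+1}) + B(x^{k+1})-B(y^k) \in (A+B)(x^{k+1}),
\]
the second summand tends to zero strongly by the Lipschitz continuity of $B$ (implied by co-coercivity) and the first does so as well (under the implicit boundedness of $\lambda_k$ inherited from (i)). Thus $v^k\to 0$ strongly, and the weak$\times$strong closedness of the graph of the maximal monotone operator $A+B$ delivers $0\in(A+B)(\bar x)$ for every weak cluster point $\bar x$. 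Opial's lemma then closes the argument. The main technical hurdle is the precisely tuned Young's parameter $t=2$ that makes the co-coercivity terms cancel and produces the clean expression $M-(\lambda_k/2)L=S_k$ in the descent estimate.
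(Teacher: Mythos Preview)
Your proposal is correct and follows essentially the same route as the paper: monotonicity of $A$ together with co-coercivity of $B$ and a critically tuned Young inequality produce the descent inequality with $S_k=M-\tfrac{\lambda_k}{2}L$, after which the Alvarez--Attouch quasi-Fej\'er machinery and Opial's lemma finish the job. The only notable deviation is in the identification of weak cluster points as solutions: you use the weak--strong closedness of the graph of the maximal monotone operator $A+B$ applied to $v^k\in(A+B)(x^{k+1})$ with $v^k\to 0$ strongly, whereas the paper passes to the limit directly in the resolvent identity $x^{\nu}=(\id+\lambda_\nu M^{-1}A)^{-1}(y^\nu-\lambda_\nu M^{-1}B(y^\nu))$ along a subsequence; your version is the more transparent of the two. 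Both arguments tacitly require some uniformity (a positive lower bound on $\lambda_k$ and on the spectrum of $S_k$) that the theorem statement does not make explicit, so your caveat ``under the implicit boundedness of $\lambda_k$ inherited from (i)'' is appropriate, though note that (i) only yields an \emph{upper} bound on $\lambda_k$; the lower bound needed for $\tfrac{1}{\lambda_k}M(y^k-x^{k+1})\to 0$ is a separate (and equally natural) standing assumption.
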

\begin{proof}
Denote by $x^*$ a zero of $A+B$. From~\eqref{eq:ab-inclusion},
it holds that
\[
-B(x^*) \in A(x^*)\,.
\]
Furthermore, the second line in~\eqref{eq:inclusion-iterate-var} can be
equivalently expressed as
\[
M(y^k-x^{k+1}) - \lambda_kB(y^k) \in\lambda_kA(x^{k+1})\,.
\]
For convenience, we define for any symmetric positive definite $M$,
\begin{align*}
  \label{eq:phik}
  \phi_M^k &=  \tfrac12\norm{x^k-x^*}_M^2 = \tfrac12\scp{M(x^k-x^*)}{x^k-x^*}\,,\\
  \Delta_{M}^k &= \tfrac12\norm{x^{k}-x^{k-1}}_{M}^2 = \tfrac12\scp{M(x^k-x^{k-1})}{x^k-x^{k-1}}\,\\
  \Gamma_{M}^k &= \tfrac12\norm{x^{k+1}-y^{k}}_{M}^2 = \tfrac12\scp{M(x^{k+1}-y^{k})}{x^{k+1}-y^{k}}\,.
\end{align*}
From the well-known identity
\begin{equation}\label{eq:three-point-phytagoras}
\scp{a-b}{a-c}_{M} = \tfrac12\norm{a-b}_{M}^2 +
\tfrac12\norm{a-c}_{M}^2 - \tfrac12\norm{b-c}_{M}^2
\end{equation}
we have by using the definition of the inertial extrapolate $y_k$ that
\begin{equation}\label{eq:phik-phik-1-0}
  \begin{split}
    \phi_{M}^k - \phi_{M}^{k+1} = \Delta_{M}^{k+1} +
    \scp{y^k-x^{k+1}}{x^{k+1}-x^*}_{M}\\
    \qquad- \alpha_k
    \scp{x^k-x^{k-1}}{x^{k+1}-x^*}_{M}\;.
  \end{split}
\end{equation}
Then, by using the monotonicity of $A$ we deduce that
\begin{align*}
  \scp{\lambda_kA(x^{k+1})-\lambda_kA(x^*)}{x^{k+1}-x^*} &\geq 0\\
  \scp{M(y^k-x^{k+1}) - \lambda_kB(y^k) + \lambda_kB(x^*)}{x^{k+1}-x^*} &\geq 0
\end{align*}
and
\[
\begin{split}
  &\scp{y^k-x^{k+1}}{x^{k+1}-x^*}_{M}\\
  &\qquad+ \lambda_k\scp{B(x^*) -
    B(y^k)}{x^{k+1}-x^*} \geq 0\,.
\end{split}
\]
Combining with~\eqref{eq:phik-phik-1-0}, we obtain
\begin{equation}\label{eq:phik-phik1}  
  \begin{split}
    \phi_{M}^k - \phi_{M}^{k+1} \geq \Delta_{M}^{k+1}
    +\lambda_k\scp{B(y^k) - B(x^*)}{x^{k+1}-x^*}\\
    \qquad-\alpha_k\scp{x^k-x^{k-1}}{x^{k+1}-x^*}_{M}\,.
  \end{split}
\end{equation}
From the co-coercivity property of $B$ we have that
\begin{align*}
  &\scp{B(y^k)-B(x^*)}{x^{k+1}-x^*}\\
  & = \scp{B(y^k)-B(x^*)}{x^{k+1}-y^k + y^k-x^*}\\
  &\geq \norm{B(y^k)-B(x^*)}_{L^{-1}}^2 + \scp{B(y^k)-B(x^*)}{x^{k+1}-y^k}\\
  & \geq \norm{B(y^k)-B(x^*)}_{L^{-1}}^2 - \norm{B(y^k)-B(x^*)}_{L^{-1}}^2 - \tfrac12\Gamma_L^k\\
  & = -\tfrac12\Gamma_L^k
\end{align*}
Substituting back into~\eqref{eq:phik-phik1} we arrive at
\begin{align*}
  \phi_{M}^k - \phi_{M}^{k+1} & \geq \Delta_{M}^{k+1} - \tfrac{\lambda_k}{2}\Gamma_L^k - \alpha_k\scp{x^k-x^{k-1}}{x^{k+1}-x^*}_{M}
\end{align*}
Invoking again~\eqref{eq:three-point-phytagoras}, it follows that
\begin{equation}\label{eq:phik-phik-1-2}
  \begin{split}
    &\phi_{M}^{k+1} - \phi_{M}^k - \alpha_k\left(\phi_{M}^k-\phi_{M}^{k-1}\right)\\
    &\leq - \Delta_{M}^{k+1} + \tfrac{\lambda_k}{2}\Gamma_L^k\\
    &\quad + \alpha_k\left(\Delta_{M}^{k} + \scp{x^k-x^{k-1}}{x^{k+1}-x^k}_{M}\right)\\
    &= - \Gamma_M^k + \tfrac{\lambda_k}{2}\Gamma_L^k +
    (\alpha_k+\alpha_k^2)\Delta_{M}^{k}\,.
  \end{split}
\end{equation}
The rest of the proof closely follows the proof of Theorem 2.1
in~\cite{AlvarezAttouch2001}. By the definition of $S_k$ and using
$(\alpha_k+\alpha_k^2)/2 \leq \alpha_k$, we have
\begin{align}
  \label{eq:phik-phik-1-3}
  \phi_{M}^{k+1} - \phi_{M}^k - \alpha_k(\phi_{M}^k-\phi_{M}^{k-1}) \leq - \Gamma_{S_k}^k + 2\alpha_k\Delta_M^k\,.
\end{align}
By assumption~(\ref{it:assumption-Sk-pos-def}), the first term is non-positive and
since $\alpha_k \geq 0$, the second term is non-negative.

Now, defining $\theta^k = \max(0, \phi_{M}^k-\phi_{M}^{k-1})$ and setting
\[
\delta^k = 2\alpha_k\Delta_M^k = \alpha_k\norm{x^{k}-x^{k-1}}_{M}^2\,,
\]
we obtain
\begin{align*}
\theta^{k+1}  \leq \alpha_k\theta^k + \delta^k \leq \alpha \theta^k + \delta^k
\end{align*}
Applying this inequality recursively, one obtaines a geometric series
of the form
\[
\theta^{k+1} \leq \alpha^k\theta^1 + \sum_{i=0}^{k-1}\alpha^i \delta^{k-i}
\]
Summing this inequality from $k=0,\dots,\infty$, one has
\[
\sum_{k=0}^\infty \theta^{k+1} \leq \frac{1}{1-\alpha}\left( \theta^1 +\sum_{k=1}^{\infty} \delta^k \right)
\]
Note that the series on the right hand side converges by assumption~(\ref{it:assumpion-convergence}).

Now we set $t^k = \phi_M^k - \sum_{i=1}^k\theta^k$
and since $\phi_M^k\geq 0$ and $\sum_{i=1}^k\theta_i$ is bounded independently of $k$, we
see that $t^k$ is bounded from below. On the other hand,
\begin{align*}
  t^{k+1} & = \phi_M^{k+1} - \theta^{k+1} - \sum_{i=1}^k\theta^i\\
  & \leq \phi_M^{k+1} - \phi_M^{k+1}+\phi_M^k - \sum_{i=1}^k\theta^i = t^k
\end{align*}
and hence, $t^k$ is also non-decreasing, thus convergent. This implies
that $\phi_M^k$ is convergent and especially that $\theta^k\to 0$.

From~\eqref{eq:phik-phik-1-3} we get
\begin{align*}
\tfrac12\norm{x^{k+1}-y^k}_{S_k}^2  & \leq -\theta^{k+1} -\alpha \theta^k + \delta^k\\
\tfrac12\norm{x^{k+1}-x^k-\alpha_k(x^k-x^{k-1})}_{S_k}^2  & \leq -\theta^{k+1} -\alpha \theta^k + \delta^k
\end{align*}
Since $\delta^k$ is summable it follows that
$\norm{x^k-x^{k-1}}_{S_k}\to 0$ and hence
\[
\lim_{k\rightarrow \infty}
\norm{x^{k+1}-x^k-\alpha_k(x^k-x^{k-1})}_{S_k} = 0\,.
\]
We already know that $x^k$ is bounded hence, there is a convergent
subsequence $x_\nu\weakto \bar x$. Then we also get that $y_\nu =
(1+\alpha_\nu)x_\nu - \alpha_\nu x_{\nu-1}\weakto \bar x$.
Now we get from~\eqref{eq:inclusion-iterate} that
\[
x^{\nu} = (\id + \lambda_\nu M^{-1}A)^{-1}(y^\nu-\lambda_\nu M^{-1}B(y^\nu))
\]
and pass to the limit (extracting another subsequence such that
$\lambda_\nu\to\bar\lambda$ if necessary) to obtain
\[
\bar x = (\id +
  \bar\lambda M^{-1}A)^{-1}(\bar x-\bar\lambda M^{-1}B(\bar x))
\]
which is equivalent to
\[
-B(\bar x)\in A(\bar x)
\]
which in turn shows that $\bar x$ is a solution. Opial's
Theorem~\cite{Opial} concludes the proof.\qed
\end{proof}

Next, we address the question whether the sequence $\{\alpha_k\}$ can
be chosen a-priori such that the algorithm is guaranteed to converge.
Indeed, in case of the inertial proximal point
algorithm~\eqref{eq:inertial-pp}, it has already been shown
in~\cite{AlvarezAttouch2001} that convergence is ensured if $\{\alpha_k\}$
is a nondecreasing sequence in $[0,\alpha]$ with $\alpha < 1/3$. The
next theorem presents a related result for the proposed algorithm.
\begin{theorem}
  \label{thm:intertial_fb_alphas}
  In addition to the conditions to
  Theorem~\ref{thm:intertial_fb_convergence} assume that
  $\{\lambda_k\}$ and $\{\alpha_k\}$ are nondecreasing sequences and
  that there exists a $\varepsilon > 0$ such that for all $\alpha_k$
  \begin{equation}\label{eq:cond-alpha}
  R_k = (1-3\alpha_{k})M - (1-\alpha_{k})^2\tfrac{\lambda_{k}}{2} L \geq \varepsilon M\,.
  \end{equation}
  Then $x^k$ converges weakly to a solution of the inclusion $0\in
  (A+B)(x^*)$.
\end{theorem}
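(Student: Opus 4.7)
The strategy is to verify hypothesis~(\ref{it:assumpion-convergence}) of Theorem~\ref{thm:intertial_fb_convergence}, namely $\sum_k\alpha_k\|x^k-x^{k-1}\|_M^2<\infty$, so that the weak convergence follows immediately from that theorem. Since $\alpha_k\leq\alpha<1$ it actually suffices to prove the stronger statement $\sum_k\Delta_M^k<\infty$, and my plan is to refine the central estimate~\eqref{eq:phik-phik-1-3} until this summability is exposed.

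Starting from~\eqref{eq:phik-phik-1-3}, I would apply the algebraic identity
\begin{equation*}
\|u-\alpha v\|_N^2 = (1-\alpha)\|u\|_N^2 + \alpha\|u-v\|_N^2 - \alpha(1-\alpha)\|v\|_N^2,
\end{equation*}
valid for any $N\succeq 0$, with $N=S_k$, $u=x^{k+1}-x^k$, and $v=x^k-x^{k-1}$. Dropping the nonnegative middle term yields $-\Gamma_{S_k}^k\leq-(1-\alpha_k)\Delta_{S_k}^{k+1}+\alpha_k(1-\alpha_k)\Delta_{S_k}^k$. The crucial algebraic fact is that the hypothesis forces $(1-\alpha_k)S_k\geq R_k$: indeed $(1-\alpha_k)S_k-R_k=\alpha_k\bigl(2M-(1-\alpha_k)\tfrac{\lambda_k}{2}L\bigr)$, and $R_k\geq\varepsilon M\succ 0$ forces $\alpha_k<1/3$, whence $(1-\alpha_k)\tfrac{\lambda_k}{2}L\leq\tfrac{1-3\alpha_k}{1-\alpha_k}M\leq\tfrac{3}{2}M$, making the bracketed operator positive. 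Therefore $-(1-\alpha_k)\Delta_{S_k}^{k+1}\leq-\varepsilon\Delta_M^{k+1}$, and expanding $\Delta_{S_k}^k=\Delta_M^k-\tfrac{\lambda_k}{2}\Delta_L^k$ leads to an Alvarez-Attouch-type recursion
\begin{equation*}
\phi_M^{k+1}-\phi_M^k-\alpha_k(\phi_M^k-\phi_M^{k-1}) \leq -\varepsilon\Delta_M^{k+1}+c(\alpha_k)\Delta_M^k-\alpha_k(1-\alpha_k)\tfrac{\lambda_k}{2}\Delta_L^k,
\end{equation*}
with a bounded coefficient $c(\alpha_k)$ of order $\alpha$.

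The conclusion is then obtained by summing from $k=1$ to $N$. Abel summation of $\sum_k\alpha_k(\phi_M^k-\phi_M^{k-1})$ produces only boundary contributions plus nonpositive interior differences $(\alpha_k-\alpha_{k+1})\phi_M^k\leq 0$ (using nondecreasingness of $\{\alpha_k\}$ together with $\phi_M^k\geq 0$); analogously, the $\Delta_L^k$ residuals telescope via the monotonicity of $\{\lambda_k\}$, the surviving boundary term being absorbed into $\Delta_M^{N+1}$ through the relation $\tfrac{\lambda_k}{2}L\preceq\tfrac{1-3\alpha_k-\varepsilon}{(1-\alpha_k)^2}M$ extracted from $R_k\geq\varepsilon M$. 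After rearranging one reads off $\sum_k\Delta_M^k<\infty$, which verifies~(\ref{it:assumpion-convergence}) and allows Theorem~\ref{thm:intertial_fb_convergence} to conclude the weak convergence of $\{x^k\}$. The main obstacle is precisely this balancing: the negative term $-\varepsilon\Delta_M^{k+1}$ must dominate $c(\alpha_k)\Delta_M^k$ after summation, and a looser argument that replaces the factor $(1-\alpha_k)^2$ in $R_k$ by $(1-\alpha_k)$ would be too wasteful --- the sharp $(1-\alpha_k)^2$ weighting, combined with the $\Delta_L^k$ telescope enabled by the monotonicity of $\{\lambda_k\}$, is exactly what closes the loop.
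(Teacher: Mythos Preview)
Your overall strategy---verify hypothesis~(\ref{it:assumpion-convergence}) of Theorem~\ref{thm:intertial_fb_convergence} by proving $\sum_k\Delta_M^k<\infty$---matches the paper, and your expansion $-\Gamma_{S_k}^k\le-(1-\alpha_k)\Delta_{S_k}^{k+1}+\alpha_k(1-\alpha_k)\Delta_{S_k}^k$ is exactly what the paper derives. The gap is the next step: replacing $(1-\alpha_k)\Delta_{S_k}^{k+1}$ by $\varepsilon\Delta_M^{k+1}$ via $(1-\alpha_k)S_k\geq R_k\geq\varepsilon M$ is premature and loses the structure needed to close the argument. After this substitution your recursion reads
\[
\phi_M^{k+1}-\phi_M^k-\alpha_k(\phi_M^k-\phi_M^{k-1})\le -\varepsilon\Delta_M^{k+1}+\alpha_k(3-\alpha_k)\Delta_M^k-\alpha_k(1-\alpha_k)\tfrac{\lambda_k}{2}\Delta_L^k,
\]
and summation can only yield $\sum_k\Delta_M^k<\infty$ if the coefficient $\alpha_k(3-\alpha_k)$ is dominated by $\varepsilon$ (after index shift). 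This fails: take $M=L=\id$, $\alpha_k\equiv 0.3$, $\lambda_k\equiv 0.02$; then $R_k\approx 0.095\,\id$ so $\varepsilon$ can be at most $0.095$, but $\alpha_k(3-\alpha_k)\approx 0.81$. The negative $\Delta_L^k$ term cannot compensate either, since $\tfrac{\lambda_k}{2}\Delta_L^k<\Delta_M^k$ gives at best an extra $-\alpha_k(1-\alpha_k)\Delta_M^k$, still leaving a positive coefficient $2\alpha_k-\varepsilon>0$ in front of $\Delta_M^k$. The ``$\Delta_L^k$ telescope'' you invoke does not materialise: the coefficients $(1-\alpha_k)\tfrac{\lambda_k}{2}$ and $\alpha_k(1-\alpha_k)\tfrac{\lambda_k}{2}$ at adjacent indices do not match, and the Abel boundary term $\phi_M^{N+1}-\alpha_N\phi_M^N$ is not a~priori bounded.

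What the paper does instead is to keep $(\alpha_k-1)\Delta_{S_k}^{k+1}$ intact and package the positive remainder $\alpha_k(1-\alpha_k)\Delta_{S_k}^k+(\alpha_k+\alpha_k^2)\Delta_M^k$ as $\alpha_k\Delta_{T_k}^k$ with $T_k=2M-(1-\alpha_k)\tfrac{\lambda_k}{2}L$. The crucial device is the auxiliary sequence $\mu^k=\phi_M^k-\alpha_k\phi_M^{k-1}+\alpha_k\Delta_{T_k}^k$: this carries $\alpha_k\Delta_{T_k}^k$ forward one step so that at stage $k{+}1$ it meets $(\alpha_k-1)\Delta_{S_k}^{k+1}$, and the operator combination $(\alpha_{k+1}-1)S_k+\alpha_{k+1}T_{k+1}\le -R_{k+1}$ produces $\mu^{k+1}-\mu^k\le-\Delta_{R_k}^{k+1}\le-\varepsilon\Delta_M^{k+1}$. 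Monotonicity of $\mu^k$ then gives boundedness of $\phi_M^k$ and summability of $\Delta_M^k$ in one stroke. Note also that the paper starts from the sharper estimate in~\eqref{eq:phik-phik-1-2} with coefficient $(\alpha_k+\alpha_k^2)$ rather than the looser $2\alpha_k$ of~\eqref{eq:phik-phik-1-3}; this is what makes $T_k$ come out with leading term $2M$ rather than $(3-\alpha_k)M$.
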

\begin{proof}
  The proof of this result is an adaption of the proof of Proposition
  2.1 in~\cite{AlvarezAttouch2001}. From the last estimate in~\eqref{eq:phik-phik-1-2}
  and using the definition of $y^k$ in~\eqref{eq:inertial-fb} it
  follows that
  \begin{align*}
    & \phi_{M}^{k+1} - \phi_{M}^k - \alpha_k(\phi_{M}^k-\phi_{M}^{k-1})\\
    & \leq - \Gamma_{S_k}^k + \alpha_k(1+\alpha_k)\Delta_{M}^{k}\\
    & \leq - \Delta^{k+1}_{S_k} - \alpha_k^2\Delta^{k}_{S_k} + \alpha_k\scp{x^{k+1}-x^k}{x^k-x^{k-1}}_{S_k}\\
    & \qquad+ (\alpha_k+\alpha_k^2)\Delta_{M}^{k}\\
    & \leq (\alpha_k-1) \Delta^{k+1}_{S_k} + (\alpha_k-\alpha_k^2)\Delta^{k}_{S_k} + (\alpha_k+\alpha_k^2)\Delta_{M}^{k}\\
    & \leq (\alpha_k-1) \Delta^{k+1}_{S_k} + \alpha_k \Delta^{k}_{T_k}\,,
  \end{align*}
  where $T_k = 2M - \tfrac{(1-\alpha_k)\lambda_k}{2}L$.

  We define $\mu^k = \phi_{M}^k - \alpha_k \phi_{M}^{k-1} + \alpha_k
  \Delta^{k}_{T_k}$ and since $\alpha_{k+1} \geq \alpha_k$ and using
  the above inequality,
  \begin{align*}
  & \mu^{k+1} - \mu^{k}\\
  & = \phi_{M}^{k+1} - \alpha_{k+1} \phi_{M}^{k} +
        \alpha_{k+1} \Delta_{T_{k+1}}^{k+1} - \phi_{M}^k + \alpha_k \phi_{M}^{k-1} - \alpha_k\Delta_{T_k}^k\\
        & \leq \phi_{M}^{k+1}- \phi_{M}^k - \alpha_{k} (\phi_{M}^{k}-\phi_{M}^{k-1}) +
        \alpha_{k+1} \Delta_{T_{k+1}}^{k+1} - \alpha_k\Delta_{T_k}^k\\
        & \leq (\alpha_k-1) \Delta_{S_k}^{k+1} + \alpha_{k+1} \Delta_{T_{k+1}}^{k+1}\,.
  \end{align*}
  Then, we obtain since $\alpha_{k+1} \geq \alpha_k$
  \begin{align*}
    & \mu^{k+1} - \mu^{k}\\
    & \leq \tfrac{1}{2}\scp{\left((\alpha_k-1) S_k + \alpha_{k+1}T_k\right)(x^{k+1}-x^k)}{x^{k+1}-x^k}\\
    & \leq \tfrac{1}{2}\scp{\left((\alpha_{k+1}-1) S_k + \alpha_{k+1}T_k\right)(x^{k+1}-x^k)}{x^{k+1}-x^k}\;.
  \end{align*}
  Now using $\alpha_{k+1} \geq \alpha_k$ and $\lambda_{k+1} \geq
  \lambda_k$ we obtain
  \[
  \begin{split}
    & (\alpha_{k+1}-1) S_k + \alpha_{k+1}T_k\\
    & \leq ( 3\alpha_{k+1}-1)M +
    (1-\alpha_{k+1})^2\tfrac{\lambda_{k+1}}{2} L= R_k
  \end{split}\]
  which finally gives
  \begin{equation}\label{eq:mu-estmiate}
    \mu^{k+1} - \mu^{k} \leq -\Delta_{R_k}^k.
  \end{equation}
  Observe that by assumption~\eqref{eq:cond-alpha}, the sequence
  $\{\mu_k\}$ is non-increasing and hence
  \[
  \phi_M^k - \alpha\phi_M^{k-1} \leq \mu^k \leq \mu^1\,.
  \]
  It follows that
  \[
  \phi_M^k \leq \alpha^k \phi^0 + \mu^1\sum_{i=0}^{k-1}\alpha^i \leq \alpha^k \phi^0 + \frac{\mu^1}{1-\alpha}
  \]
  On the other hand, we have by summing up~\eqref{eq:mu-estmiate} from
  $i=1$ to $k$,
  \[
  \mu^{k+1}-\mu^1 \leq - \sum_{i=1}^k \Delta_{R_i}^i\,.
  \]
  Combining these two estimates it follows that
  \[
  \sum_{i=1}^k \Delta_{R_i}^i \leq \mu^1 - \mu^{k+1} \leq \mu^1 +
  \alpha \phi_M^k \leq \alpha^{k+1} \phi^0 + \frac{\mu^1}{1-\alpha}\,.
  \]
  Since $R_k \geq \varepsilon M$, it follows that
  \[
  \sum_{k=1}^\infty \Delta_{M}^k < \infty\,,
  \]
  which especially shows (ii) in
  Theorem~\ref{thm:intertial_fb_convergence}. The weak convergence of
  the $x^k$ now follows from
  Theorem~\ref{thm:intertial_fb_convergence}.\qed
\end{proof}

\begin{remark}\label{remark1}
  In case, $M=m\id$, $L=l\id$, $\lambda_k \equiv \lambda$ and defining
  the normalized step size $\gamma=\frac{ l \lambda}{m} \in(0,2)$,
  assertion~\eqref{eq:cond-alpha} reduces to
  \[
  1 - 3 \alpha_{k} - \varepsilon - \frac{(1-\alpha_{k})^2}{2}\gamma \geq 0\,.
  \]
  It easily follows that for any $\varepsilon \in (0,
  (9-4\gamma)/(2\gamma))$, the algorithm converges, if the sequence
  $\{\alpha_k\}$ is non-decreasing with $0\leq\alpha_k\leq\alpha(\gamma)$, where
  \[
  \alpha(\gamma) = 1 + \frac{\sqrt{9 - 4\gamma - 2\varepsilon\gamma} - 3}{\gamma}\;.
  \]
  See Figure~\ref{fig:alpha} for a plot of $\alpha(\gamma)$ using
  $\varepsilon=10^{-6}$.
\end{remark}

\begin{figure}
  \centering
  \includegraphics[width=0.3\textwidth]{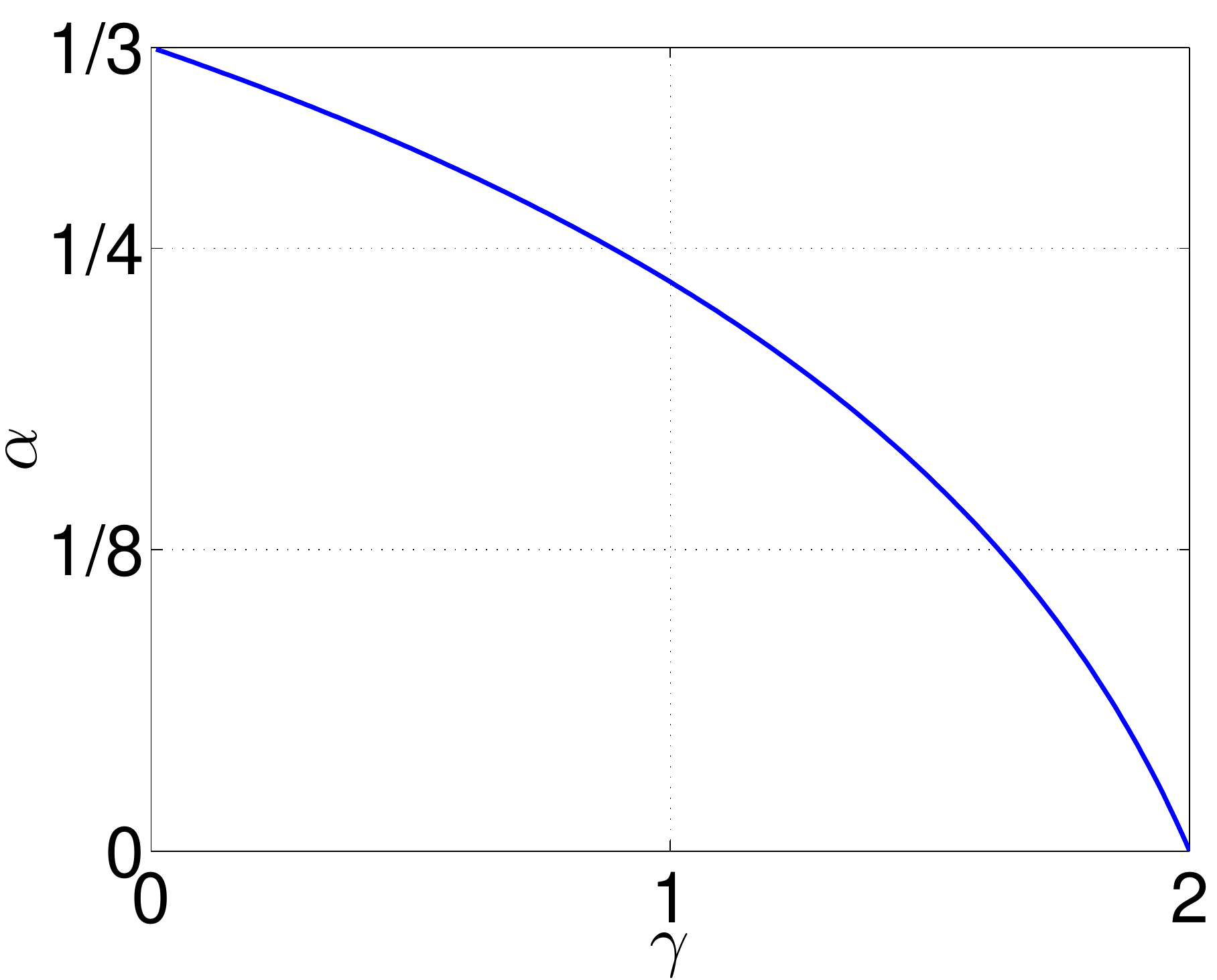}
  \caption{Upper bound on the extrapolation factor $\alpha$ in
    dependence on $\gamma$.}
    \label{fig:alpha}
\end{figure}

\begin{remark}\label{remark:implicit}
  Let us consider a ``fully-implicit'' variant of the
  scheme~\eqref{eq:inclusion-iterate}, which is given by
  \[
  \begin{cases}
    y^k = x^k + \alpha_k(x^k-x^{k-1})\,,\\
    x^{k+1} = (\id + \lambda_k \overline M^{\,-1} (A+B))^{-1}(y^k)\,,
  \end{cases}
  \]
  where $\overline M$ is again a linear, selfadjoint and positive
  definite map.  In fact this algorithm, is an inertial proximal point
  algorithm, in the $\overline M$ metric, whose convergence properties
  have been studied in~\cite{AlvarezAttouch2001}. This algorithm has less
  stringent convergence properties compared to the algorithm proposed
  in this paper, but its application to practical problems is limited
  since the resolvent with respect to $A+B$ can be complicated.

  Interestingly, if the operator $B$ is a linear, selfadjoint and
  positive semi-definite map, the above fully-implicit scheme can be
  significantly simplified. In fact, using $\lambda_k\equiv\lambda$
  and setting $\overline M = M - \lambda B$, where $\lambda$ is chosen such
  that $\overline M > 0$, it turns out that the fully implicit scheme in
  the $\overline M$ metric is equivalent to our proposed inertial
  forward-backward splitting algorithm~\eqref{eq:inclusion-iterate} in
  the $M$ metric, which only requires to compute the resolvent with
  respect to $A$.

  According to Theorem 2.1 and Proposition 2.1
  in~\cite{AlvarezAttouch2001}, condition (i) of Theorem 1 can be
  replaced by the simpler condition $M - \lambda B > 0$ and
  convergence of the algorithm is guaranteed for $\{\alpha_k\}$
  non-decreasing in $[0,\alpha]$ with $\alpha < 1/3$.
\end{remark}

\section{Application to convex-concave saddle-point problems}
\label{sec:pdfb}

Recently, so-called primal-dual splitting techniques have been
proposed which are motivated by the need to solve large-scale
non-smooth convex optimization problems in image
processing~\cite{CP2010,Esseretal10,pock2011_precond,HeYuan2012,Condat2013,CombettesPesquet12,Vu2013}. These algorithms can be
applied if the structure of the problem allows to rewrite
it as certain convex-concave saddle-point problems.

Now let $X$ and $Y$ be two Hilbert spaces and consider the saddle
point problem
\begin{equation}
  \label{eq:saddle-point-problem}
  \min_{x\in X}\max_{y\in Y} G(x) + Q(x) + \scp{Kx}{y} - F^*(y) - P^*(y)
\end{equation}
with convex $G,Q:X \to \extRR$, $F^*,P^*:Y\to\extRR$, $K:X\to Y$
linear and bounded and $Q,P^*$ differentiable with Lipschitz gradient
(with respective Lipschitz constants $L_Q$, $L_P$).

We define the monotone operators $A,B$ on $X\times Y$ as
\[
A =
\begin{bmatrix}
  \partial G & K^*\\
  -K & \partial F^*
\end{bmatrix},\qquad
B =
\begin{bmatrix}
  \grad Q & 0\\
  0 & \grad P^*
\end{bmatrix}
\]
and observe that the optimality system of the saddle point problem can
be written as
\[
0 \in (A + B)
\begin{bmatrix}
  x\\
  y
\end{bmatrix}.
\]
This setup fits into our general framework of
section~\ref{sec:fb-monotone}.

The standard splitting iterations (\ref{eq:fb-splitting}) and
(\ref{eq:inertial-fb}) would not be applicable in general since the
evaluation of the proximal mapping $(\id + \lambda A)^{-1}$ may be
prohibitively expensive in this case.  However, if we consider the
preconditioned iteration~(\ref{eq:inclusion-iterate}) with an
appropriate mapping $M$ the iteration becomes feasible. The idea is,
to choose $M$ such that one of the off-diagonal blocks in $A$ cancel
out. However, $M$ still has to be symmetric and positive definite and
this leads to the choice
\begin{equation}\label{eq:primal-dual-norm}
M =
\begin{bmatrix}
  \tfrac1\tau\id & -K^*\\
  -K  & \tfrac1\sigma\id
\end{bmatrix}.
\end{equation}
From~(\ref{eq:inclusion-iterate}) we get for $\lambda_k = 1$ the
following inertial primal-dual forward-backward algorithm
\begin{equation}
  \label{eq:ipdfb}
  \begin{cases}
    \xi^k  & = x^k + \alpha_k(x^k - x^{k-1})\\
    \zeta^k & = y^k + \alpha_k (y^k - y^{k-1})\\
    x^{k+1} & = (\id + \tau\partial G)^{-1}(\xi^k - \tau(\grad Q(\xi^k) + K^*\zeta^k))\\
    \bar \xi^{k+1} & = 2x^{k+1} - \xi^k\\
    y^{k+1} & =(\id + \sigma\partial F^*)^{-1}(\zeta^k - \sigma(\grad P^*(\zeta^k) - K\bar\xi^{k+1})).
  \end{cases}
\end{equation}
In the case that $Q= P^* = 0$ and $\alpha_k=0$ we obtain the primal-dual method
from~\cite{CP2010}.

The next two results characterize the conditions under which the
proposed inertial primal-dual forward-backward algorithm algorithm
converges.
\begin{theorem}
  \label{thm:convergence-saddle-point}
  The iterates given by method~(\ref{eq:ipdfb}) converge weakly to a
  solution of the saddle point problem~(\ref{eq:saddle-point-problem})
  if
  \begin{equation}\label{eq:ipdfb-conv1}
    \begin{split}
      0 < \tau < 2/L_Q, \quad 0 < \sigma < 2/L_P,\\
      \norm{K}^2 < (\tfrac1\tau - \tfrac{L_Q}{2})(\tfrac1\sigma - \tfrac{L_P}{2}),
    \end{split}
  \end{equation}
  and if $\alpha_k \in [0,\alpha]$ with $\alpha < 1$ and the iterates
  $(x^k,y^k)$ fulfill
  \begin{equation}\label{eq:ipdfb-conv2}
  \sum_{k=1}^\infty \alpha_k\norm{(x^k,y^k)-(x^{k-1},y^{k-1})}_M^2<\infty.
  \end{equation}
  Furthermore, condition~\eqref{eq:ipdfb-conv2} is fulfilled, if
  $\{\alpha_k\}$ is nondecreasing and there exists $\varepsilon > 0$
  such for all $\alpha_k$
  \begin{equation}\label{eq:ipdfb-conv3}
    \begin{split}
      \tfrac{1 - 3\alpha_k - \varepsilon}{\tau} \geq \tfrac{(1-\alpha_k)^2}{2} L_Q, \\
      \tfrac{1 - 3\alpha_k - \varepsilon}{\sigma} \geq \tfrac{(1-\alpha_k)^2}{2} L_P,\\
      \Big(\tfrac{1 - 3\alpha_k - \varepsilon}{\tau} -
      \tfrac{(1-\alpha_k)^2}{2} L_Q\Big) \Big( \tfrac{1 - 3\alpha_k -
        \varepsilon}{\sigma} - \tfrac{(1-\alpha_k)^2}{2} L_P\Big)\\ \geq
      (1-3\alpha_k-\varepsilon)^2\norm{K}^2\,.
    \end{split}
  \end{equation}
\end{theorem}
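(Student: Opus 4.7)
The plan is to recognize that method~\eqref{eq:ipdfb} is precisely the instance of the preconditioned inertial forward-backward scheme~\eqref{eq:inclusion-iterate-var} obtained by using the block operators
\[
A=\begin{bmatrix}\partial G & K^{*}\\-K & \partial F^{*}\end{bmatrix},\qquad
B=\begin{bmatrix}\nabla Q & 0\\ 0 & \nabla P^{*}\end{bmatrix}
\]
on $X\times Y$, the preconditioner $M$ from~\eqref{eq:primal-dual-norm}, and the step size $\lambda_k\equiv 1$. First I would verify this identification by direct computation: with this $M$, the sum $M+A$ is \emph{lower block-triangular}, since the two $-K^{*}$ entries cancel, so $(M+A)^{-1}$ can be solved row-by-row, yielding exactly the two resolvent updates in~\eqref{eq:ipdfb} (with the overrelaxation $\bar\xi^{k+1}=2x^{k+1}-\xi^k$ appearing naturally from the $2Ka$ term produced by back-substitution). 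Maximal monotonicity of $A$ follows from the standard fact that adding a skew-adjoint bounded operator to the subdifferential of a proper lsc convex function preserves maximal monotonicity, while $B$ is maximally monotone as a block-diagonal gradient operator.

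Next I would certify the cocoercivity hypothesis. Because $Q$ and $P^{*}$ are convex with $L_Q$- and $L_P$-Lipschitz gradients, Baillon--Haddad gives $1/L_Q$- and $1/L_P$-cocoercivity of $\nabla Q$ and $\nabla P^{*}$ respectively. Summing componentwise yields
\[
\scp{B(u)-B(v)}{u-v}\geq \norm{B(u)-B(v)}_{L^{-1}}^{2},\qquad L=\begin{bmatrix}L_Q\id & 0\\ 0 & L_P\id\end{bmatrix},
\]
so $B$ is cocoercive with respect to this block-diagonal $L$.

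To apply Theorem~\ref{thm:intertial_fb_convergence} I must check that $S_k=M-\tfrac12 L$ is positive definite. Writing
\[
S=\begin{bmatrix}(\tfrac1\tau-\tfrac{L_Q}{2})\id & -K^{*}\\ -K & (\tfrac1\sigma-\tfrac{L_P}{2})\id\end{bmatrix},
\]
the Schur complement criterion says $S\succ 0$ iff the diagonal entries are positive and $\norm{K}^{2}<(\tfrac1\tau-\tfrac{L_Q}{2})(\tfrac1\sigma-\tfrac{L_P}{2})$, which is exactly~\eqref{eq:ipdfb-conv1}. Hypothesis~(ii) of Theorem~\ref{thm:intertial_fb_convergence} is just~\eqref{eq:ipdfb-conv2}, so the weak-convergence conclusion of Theorem~\ref{thm:intertial_fb_convergence} transfers directly, and a zero of $A+B$ is a saddle point of~\eqref{eq:saddle-point-problem}.

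For the final assertion I would invoke Theorem~\ref{thm:intertial_fb_alphas} and translate its abstract condition $R_k=(1-3\alpha_k)M-(1-\alpha_k)^{2}\tfrac{\lambda_k}{2}L\succeq \varepsilon M$ (with $\lambda_k=1$) through the same Schur-complement argument. The matrix $R_k-\varepsilon M$ has block form
\[
\begin{bmatrix}\tfrac{1-3\alpha_k-\varepsilon}{\tau}-\tfrac{(1-\alpha_k)^{2}}{2}L_Q & -(1-3\alpha_k-\varepsilon)K^{*}\\ -(1-3\alpha_k-\varepsilon)K & \tfrac{1-3\alpha_k-\varepsilon}{\sigma}-\tfrac{(1-\alpha_k)^{2}}{2}L_P\end{bmatrix},
\]
and its positive semidefiniteness is equivalent to the three scalar inequalities in~\eqref{eq:ipdfb-conv3}. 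Consequently~\eqref{eq:ipdfb-conv3} is precisely~\eqref{eq:cond-alpha} in this setting, so Theorem~\ref{thm:intertial_fb_alphas} yields~\eqref{eq:ipdfb-conv2} and convergence follows.

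The only real obstacle is bookkeeping: making sure the block-Schur manipulations are carried out consistently (signs of the $-K^{*}$ entries, and the fact that $R_k$ is compared to $\varepsilon M$ and not to zero), and that the identification of~\eqref{eq:ipdfb} with~\eqref{eq:inclusion-iterate-var} correctly produces the overrelaxation step $\bar\xi^{k+1}=2x^{k+1}-\xi^k$. None of these require new ideas beyond what Theorems~\ref{thm:intertial_fb_convergence} and~\ref{thm:intertial_fb_alphas} already supply.
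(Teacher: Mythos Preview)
Your proposal is correct and follows essentially the same route as the paper: identify~\eqref{eq:ipdfb} as~\eqref{eq:inclusion-iterate-var} with the block operators $A,B$, the preconditioner $M$ of~\eqref{eq:primal-dual-norm} and $\lambda_k\equiv 1$, use Baillon--Haddad to obtain cocoercivity of $B$ with respect to $L=\diag(L_Q\id,L_P\id)$, check positive definiteness of $S=M-\tfrac12 L$ via the block/Schur criterion to recover~\eqref{eq:ipdfb-conv1}, and then translate~\eqref{eq:cond-alpha} into~\eqref{eq:ipdfb-conv3} by the same block argument applied to $R_k-\varepsilon M$. The paper's proof is terser (it simply asserts the positive-definiteness checks are ``easy'' rather than naming the Schur complement), but the structure and the key ingredients are identical.
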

\begin{proof}
  Since $Q$ and $P^*$ are convex with Lipschitz-continuous gradients
  with Lipschitz constants $L_Q$ and $L_P$, respectively, it follows
  from the Baillon-Haddad Theorem (\cite[Corollary 18.16]{BauschkeCombettes})
  that that $\grad Q$ and $\grad P^*$ are
  co-coercive w.r.t. $L_Q^{-1}$ and $L_P^{-1}$, respectively. Hence,
  for $x,\xi\in X$ and $y,\zeta\in Y$ it holds that
  \begin{multline*}
    \scp{B(x,y)-B(\xi,\zeta)}{(x,y)-(\xi,\zeta)}_{X\times Y}  \\=
    \scp{\grad Q(x) -
      \grad Q(\xi)}{x-\xi}_X + \scp{\grad P^*(y) - \grad P^*(\zeta)}{y-\zeta}_Y\\
    \geq L_Q^{-1} \norm{\grad Q(x) - \grad Q(\xi)}_X^2 + L_P^{-1}\norm{\grad
      P^*(y) - \grad P^*(\zeta)}_Y^2.
  \end{multline*}
  Thus $B$ is co-coercive w.r.t. the mapping 
  \[
  L^{-1} =
  \begin{bmatrix}
    L_Q^{-1}\id & 0 \\
    0 & L_P^{-1}\id
  \end{bmatrix}
  \]
  It is easy to check that
  \[
  S = M - \tfrac12 L =
  \begin{bmatrix}
    (\tfrac1\tau - \tfrac{L_Q}{2})\id & -K^*\\
    -K & (\tfrac1\sigma - \tfrac{L_P}{2})\id
  \end{bmatrix}
  \]
  is positive definite if~\eqref{eq:ipdfb-conv1} is fulfilled
  and~\eqref{eq:ipdfb-conv2} follows from
  Theorem~\ref{thm:intertial_fb_convergence}.
  
  Applying condition~\eqref{eq:cond-alpha} to~\eqref{eq:ipdfb} we
  have:
  \[
  \begin{split}
    (1-3\alpha_k)
    \begin{bmatrix}
      \tfrac1\tau \id & -K^*\\
      -K & \tfrac1\sigma \id
    \end{bmatrix} - \frac{(1-\alpha_k)^2}{2}
    \begin{bmatrix}
      L_Q \id & 0\\
      0 & L_P\id
    \end{bmatrix}\\
    \geq \varepsilon
    \begin{bmatrix}
      \tfrac1\tau \id & -K^*\\
      -K & \tfrac1\sigma \id
    \end{bmatrix}
  \end{split}
  \]
  which can be checked to be true under the
  stated condition~\eqref{eq:ipdfb-conv3}.\qed
\end{proof}
Let us present some practical rules to choose feasible parameters for
the algorithm. For this we introduce the parameters $\gamma,\delta
\in(0,2)$, which can be interpreted as normalized step sizes in the
primal and dual variables and the parameter $r > 0$, which controls
the relative scaling between the primal and dual step sizes.
\begin{lemma}
  Choose $\gamma,\delta \in(0,2)$ and $r > 0$ and set
  \[
  \tau = \frac{1}{\norm{K}r + L_Q/\gamma} \quad \text{and} \quad \sigma =
  \frac{1}{\norm{K}/r + L_P/\delta}\,,
  \]
  Furthermore, let $\{\alpha_k\}$ be a non-decreasing sequence
  satisfying $0 \leq \alpha_k \leq \alpha(\gamma,\delta)$, where
  \begin{equation}
    \label{eq:alpha-bound}
    \alpha(\gamma,\delta) = 1 + \frac{\sqrt{9 - 4\max(\gamma,\delta) -
        2\varepsilon\max(\gamma,\delta)} - 3}{\max(\gamma,\delta)}\;,
  \end{equation}
  and $\varepsilon \in (0,(9 -
  4\max(\gamma,\delta))/(2\max(\gamma,\delta)))$. Then, the
  conditions~\eqref{eq:ipdfb-conv1} and~\eqref{eq:ipdfb-conv3} of
  Theorem~\ref{thm:convergence-saddle-point} hold,
  i.e. algorithm~\eqref{eq:ipdfb} converges weakly to a solution of
  the saddle-point problem~\eqref{eq:saddle-point-problem}.
\end{lemma}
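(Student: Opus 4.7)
The plan is to substitute the prescribed $\tau$ and $\sigma$ directly into conditions~\eqref{eq:ipdfb-conv1} and~\eqref{eq:ipdfb-conv3} of Theorem~\ref{thm:convergence-saddle-point} and reduce each component to the one-variable inequality already analyzed in Remark~\ref{remark1}. The verification is entirely algebraic; no new ideas are required. The only mildly subtle point, which I expect to be the main obstacle, is collapsing two separate bounds on $\alpha_k$ (one arising from the $\gamma$-component and one from the $\delta$-component) into the single expression~\eqref{eq:alpha-bound}; this will rely on strict monotonicity of the function $u\mapsto\alpha(u)$ from Remark~\ref{remark1}.

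For~\eqref{eq:ipdfb-conv1}, I would first observe that $\gamma<2$ forces $\tau<\gamma/L_Q<2/L_Q$, and symmetrically $\sigma<2/L_P$. Expanding the diagonal entries as $\tfrac{1}{\tau}-\tfrac{L_Q}{2}=\norm{K}r+L_Q(\tfrac{1}{\gamma}-\tfrac{1}{2})>\norm{K}r$ and analogously $\tfrac{1}{\sigma}-\tfrac{L_P}{2}>\norm{K}/r$, the product is strictly larger than $\norm{K}^2$, which is the third inequality in~\eqref{eq:ipdfb-conv1}.

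For~\eqref{eq:ipdfb-conv3}, I would set $c_k:=1-3\alpha_k-\varepsilon$ and $\beta_k:=(1-\alpha_k)^2/2$, and rewrite the first inequality by the definition of $\tau$ as $c_k\norm{K}r+L_Q(c_k/\gamma-\beta_k)\geq 0$. This is guaranteed as soon as $c_k\geq\gamma\beta_k$, i.e.\ $1-3\alpha_k-\varepsilon-\gamma(1-\alpha_k)^2/2\geq0$, which is precisely the scalar condition of Remark~\ref{remark1} and holds for all $\alpha_k\in[0,\alpha(\gamma)]$; the same argument with $\delta$ in place of $\gamma$ handles the second inequality. For the product inequality, the two scalar bounds just established give $c_k/\tau-\beta_kL_Q\geq c_k\norm{K}r$ and $c_k/\sigma-\beta_kL_P\geq c_k\norm{K}/r$, so their product is at least $c_k^2\norm{K}^2$, as required.

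It remains to collapse the bounds $\alpha_k\leq\alpha(\gamma)$ and $\alpha_k\leq\alpha(\delta)$ into $\alpha_k\leq\alpha(\max(\gamma,\delta))$. The key is to show that $u\mapsto\alpha(u)=1+(\sqrt{9-(4+2\varepsilon)u}-3)/u$ is strictly decreasing on $(0,2)$. A short computation shows that the sign of $\alpha'(u)$ is that of $h(u):=(4+2\varepsilon)u+6\sqrt{9-(4+2\varepsilon)u}-18$; since $h(0)=0$ and $h'(u)=(4+2\varepsilon)\bigl(1-3/\sqrt{9-(4+2\varepsilon)u}\bigr)<0$ for $u>0$, we conclude $h<0$, hence $\alpha'<0$ on $(0,2)$. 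Therefore $\min(\alpha(\gamma),\alpha(\delta))=\alpha(\max(\gamma,\delta))$, so the hypothesis $\alpha_k\leq\alpha(\max(\gamma,\delta))$ implies both component bounds simultaneously, and the lemma follows by invoking Theorem~\ref{thm:convergence-saddle-point}.
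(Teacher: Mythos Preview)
Your proposal is correct and follows essentially the same route as the paper: substitute the prescribed $\tau,\sigma$ into~\eqref{eq:ipdfb-conv1} and~\eqref{eq:ipdfb-conv3} and reduce each to the scalar quadratic inequality $1-3\alpha_k-\varepsilon\geq\tfrac{(1-\alpha_k)^2}{2}\cdot(\text{normalized step})$ from Remark~\ref{remark1}. The only organizational difference is in how the two component bounds are merged: the paper replaces $\gamma$ and $\delta$ by $\max(\gamma,\delta)$ \emph{before} solving the quadratic (which is immediate since the coefficient of $(1-\alpha_k)^2$ increases), whereas you solve first and then prove that $u\mapsto\alpha(u)$ is strictly decreasing; your derivative computation is correct but unnecessary, since the monotonicity in the coefficient is trivially inherited by the constraint without ever writing down $\alpha(u)$.
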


\begin{proof}  
  It can be easily checked that the conditions~\eqref{eq:ipdfb-conv1}
  hold. Indeed, one has $\tau < 2/L_Q$, $\sigma < 2/L_P$ and also
  \begin{multline}\label{eq:estimate1}
  \left(\tfrac{1}{\tau} - \tfrac{L_Q}{2}\right)\left(\tfrac{1}{\sigma}
  - \tfrac{L_P}{2}\right) =\\ \norm{K}^2 +
  \norm{K}\left(\tfrac{rL_P(2-\delta)}{2\delta} +
  \tfrac{L_Q(2-\gamma)}{2\gamma r}\right) +
  L_QL_P\tfrac{(2-\gamma)(2-\delta)}{4\gamma\delta}\\ \geq \norm{K}^2\;.
  \end{multline}
  Next, we can compute the maximum value of $\alpha_k$ that ensures
  convergence of the algorithm. Observe that for any $r > 0$
  assertion~\eqref{eq:ipdfb-conv3} holds in particular if
  \[
  (1 - 3\alpha_k - \varepsilon) - \frac{(1-\alpha_k)^2}{2} \frac{\tau
    L_Q}{1-\tau\norm{K}r}\geq 0\,,
  \]
  and
  \[
  (1 - 3\alpha_k - \varepsilon) - \frac{(1-\alpha_k)^2}{2} \frac{\sigma
    L_P}{1-\sigma\norm{K}/r}\geq 0\,.
  \]
  where $\gamma=\frac{\tau L_Q}{1-\tau\norm{K}r}$ and
  $\delta=\frac{\sigma L_P}{1-\sigma\norm{K}/r}$. Clearly, the two
  inequalities are fulfilled if
  \[
  (1 - 3\alpha_k - \varepsilon) - \frac{(1-\alpha_k)^2}{2}
  \max(\gamma,\delta) \geq 0\,,
  \]
  from which the upper bound $\alpha(\gamma,\delta)$ follows.\qed
\end{proof}
\begin{remark}
  From equation~\eqref{eq:estimate1}, we can see that in case $L_P$ or
  $L_Q$ is zero (and fixed $\gamma$ respectively $\delta$), it might
  be favorable to choose larger respectively smaller values of $r$,
  since it leads to a smaller product of the terms on the left hand
  side of~\eqref{eq:estimate1} and hence to larger product of primal
  and dual step sizes.
\end{remark}
\subsection{Recovering known algorithms}
\label{sec:fb}

The proposed algorithm includes several popular algorithms for convex
optimization as special cases:
\begin{itemize}
\item \textbf{Forward-backward splitting:} Set $K=F^*=P^*=0$
  in~\eqref{eq:saddle-point-problem}, and set $M=\id$, $\lambda_k <
  2/L_Q$ and $\alpha_k=0$ in~\eqref{eq:ipdfb}. We obtain exactly the
  popular forward-backward splitting algorithm~\eqref{eq:fb-splitting}
  for minimizing the sum of a smooth and a non-smooth convex
  function. See~\cite{CombettesWajs,DaubechieISTA}.
\item \textbf{Nesterov's accelerated gradient method:} In addition to
  the previous setting, let $\lambda_k=1/L_Q$ and let the sequence
  $\{\alpha_k\}$ be computed according to one of the laws proposed
  in~\cite{Nesterov83,Nesterov,BeckTeboulle2008}. We can exactly
  recover Nesterov's accelerated gradient
  method~\cite{Nesterov83,Nesterov}, the accelerated proximal point
  algorithm~\cite{Gueler1991} and
  FISTA~\cite{BeckTeboulle2008}. 
  
  These algorithms offer an optimal convergence rate of
  $\mathcal{O}(1/k^2)$ for the function gap
  $(G+Q)(x^k)-(G+Q)(x^*)$. However, it is still unclear whether the
  sequence of iterates $\{x^k\}$ converges. We cannot give a full
  answer here but we can at least modify the FISTA algorithm such that
  the sequence $\alpha_k\|x^k-x^{k-1}\|^2$ has finite length.
  Following~\cite{AlvarezAttouch2001},
  condition~\eqref{it:assumpion-convergence} can be easily enforced
  ``on-line'' because it involves only past iterates. One possibility
  to ensure summability in~\eqref{it:assumpion-convergence} is to
  require that $\alpha_k\norm{x^k-x^{k-1}}_M^2 = \mathcal{O}(1/k^2)$,
  e.g.
  \begin{equation}\label{eq:saveguard}
    \alpha_k = \min((k-1)/(k+2), c/(k^2\norm{x^k-x^{k-1}}_M^2)),
  \end{equation}
  for some $c > 0$. However, since in the FISTA algorithm $\alpha_k =
  (k-1)/(k+2)\rightarrow 1$,
  Theorem~\ref{thm:intertial_fb_convergence} still does not imply
  convergence of the iterates. This is left for future investigation.
\item \textbf{Primal-dual algorithms:} Setting
  in~\eqref{eq:saddle-point-problem} $P^*=Q=0$ and let
  in~\eqref{eq:ipdfb} $\alpha_k=0$, we clearly obtain the first-order
  primal-dual algorithm investigated
  in~\cite{PBCC2009,Esseretal10,CP2010,HeYuan2012}. Furthermore, if we
  let $Q$ be a convex function with Lipschitz continuous gradient
  $\nabla Q$, we obtain the first-order primal-dual algorithm of
  Condat~\cite{Condat2013}. Moreover, in the present of smooth terms
  $Q$ and $P^*$ in the primal and dual problem, we recover V\~u's
  algorithm from~\cite{Vu2013}.  We point out that the methods
  in~\cite{Condat2013,Vu2013} involve an additional relaxation step of
  the form:
  \begin{equation}\label{eq:overrelaxation}
  x^{k+1} = ((1-\rho_k)\id + \rho_k(\id + \lambda_k T)^{-1})(x^k)\,,
  \end{equation}
  where $\rho_k$ is the relaxation parameter. In case there are no
  smooth term $Q$ and $P^*$ the relaxation parameter $\rho_k \in
  (0,2)$, in presence of $Q$ and $P^*$ the relaxation parameter is
  further restricted. See Section~\ref{sec:numerics} for numerical
  comparisons. Observe, that the overrelaxation technique is quite
  different from the inertial technique we used in this paper, which
  is of the form:
  \begin{equation}
  x^{k+1} = (\id + \lambda_k T)^{-1})(x^k + \alpha_k(x^k-x^{k-1}))\,.
  \end{equation}
  Indeed, it was shown in~\cite{Alvarez2003} that one can even use
  overrelaxation and inertial forces simultaneously. However,
  introducing an additional overrelaxation step in the proposed
  framework is left for future investigation.
\end{itemize}

\subsection{Preconditioning}
\label{sec:precond}

Besides the property of the map $M$ to make the primal-dual iterations
feasible, the map $M$ can also be interpreted as applying the
algorithm~\eqref{eq:inclusion-iterate} using $M=\id$ to the modified
inclusion:
\[
-M^{-1}B(x^*) \in M^{-1}A(x^*)\,,
\]
and hence, $M^{-1}$ can be interpreted as a left preconditioner to the
inclusion~\eqref{eq:ab-inclusion}. In the context of saddle point
problems, Pock and Chambolle~\cite{pock2011_precond} proposed a
preconditioning of the form
\[
M =
\begin{bmatrix}
  T^{-1} & -K^*\\
  -K & \Sigma^{-1}
\end{bmatrix}.
\]
where $T$ and $\Sigma$ are selfadjoint, positive definite maps. A
condition for the positive definiteness of $M$ follows from the
following lemma.
\begin{lemma}
  \label{lem:pos_def_block}
  Let $A_1$, $A_2$ be symmetric positive definite maps and $B$ a
  bounded operator. If $\norm{A_2^{-\frac{1}{2}}BA_1^{\frac{1}{2}}}<1$, then $A =
  \begin{bmatrix}
    A_1 & B^*\\
    B & A_2
  \end{bmatrix}
  $ is positive definite.  
\end{lemma}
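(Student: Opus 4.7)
The plan is to verify positive definiteness directly by evaluating the quadratic form associated with the block operator $A$ on an arbitrary vector $(x,y)\in X\times Y$ and then reducing the resulting expression to a scalar inequality via a change of variables using the square roots $A_1^{1/2}$ and $A_2^{1/2}$, which are well defined and boundedly invertible because $A_1, A_2$ are bounded selfadjoint positive definite maps.

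Concretely, I would first write out
\[
\scp{A(x,y)}{(x,y)} = \scp{A_1 x}{x} + 2\scp{Bx}{y} + \scp{A_2 y}{y},
\]
and then substitute $u = A_1^{1/2}x$, $v = A_2^{1/2}y$. Since $A_1^{1/2}$ and $A_2^{1/2}$ are bijections, $(x,y) \neq 0$ if and only if $(u,v)\neq 0$. The quadratic form becomes
\[
\norm{u}^2 + 2\scp{Cu}{v} + \norm{v}^2,
\qquad C := A_2^{-1/2} B A_1^{-1/2}
\]
(which is the natural object; the exponent on $A_1$ in the hypothesis as printed appears to be a typo, the intended condition being $\norm{C}<1$). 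Setting $c=\norm{C}<1$ and applying Cauchy--Schwarz gives $\scp{Cu}{v}\ge -c\norm{u}\norm{v}$, so the quadratic form is bounded below by $\norm{u}^2-2c\norm{u}\norm{v}+\norm{v}^2$. A one-line elementary estimate shows this is at least $(1-c)(\norm{u}^2+\norm{v}^2)$, which is strictly positive whenever $(u,v)\neq 0$, and one reads off a uniform lower bound of $(1-c)\min(\norm{A_1^{-1}}^{-1},\norm{A_2^{-1}}^{-1})(\norm{x}^2+\norm{y}^2)$, proving positive definiteness.

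An equivalent and perhaps cleaner route is the Schur complement: since $A_1$ is positive definite, $A$ is positive definite iff the Schur complement $A_2 - BA_1^{-1}B^*$ is positive definite, and $\scp{(A_2-BA_1^{-1}B^*)v}{v} = \norm{v}^2 - \norm{A_1^{-1/2}B^*A_2^{-1/2}v}^2$ after the substitution $v\leftarrow A_2^{1/2}v$; this is positive for $v\neq 0$ precisely when $\norm{A_1^{-1/2}B^*A_2^{-1/2}} = \norm{C^*} = \norm{C} < 1$.

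The only delicate step is justifying that $A_1^{\pm 1/2}$, $A_2^{\pm 1/2}$ exist as bounded selfadjoint positive definite operators on the relevant Hilbert spaces and hence that the change of variables is a genuine bijection; this is standard functional calculus for bounded selfadjoint positive definite operators and causes no real obstacle. The remaining algebra, including matching the hypothesis to $\norm{C}<1$, is routine.
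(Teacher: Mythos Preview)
Your proposal is correct and follows essentially the same route as the paper: expand the quadratic form, pass through the square roots $A_1^{1/2},A_2^{1/2}$, and control the cross term by Cauchy--Schwarz. The only cosmetic difference is that the paper uses Young's inequality with a free parameter $c>0$ to bound $\scp{Bx}{y}\geq -\tfrac{c}{2}\norm{C}^2\norm{A_1^{1/2}x}^2 - \tfrac{1}{2c}\norm{A_2^{1/2}y}^2$ and then argues that $c$ can be chosen to make both coefficients positive, whereas you apply AM--GM directly to obtain the explicit lower bound $(1-\norm{C})(\norm{u}^2+\norm{v}^2)$; your version is slightly sharper and cleaner. You are also right that the hypothesis as printed should read $\norm{A_2^{-1/2}BA_1^{-1/2}}<1$ --- the paper's own proof silently uses this corrected form.
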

\begin{proof}
  We calculate
  \[
  \scp{
    \begin{bmatrix}
      x\\y 
    \end{bmatrix}
  }
  {
    \begin{bmatrix}
      A_1 & B^*\\ B & A_2
    \end{bmatrix}
    \begin{bmatrix}
      x\\y
    \end{bmatrix}
  }
  = \scp{x}{A_1x} + 2\scp{Bx}{y} + \scp{y}{A_2 y}
  \]
  and estimate the middle term from below by Cauchy-Schwarz and Young's
  inequality and get for every $c>0$ that
  \begin{align*}
    \scp{Bx}{y} & = \scp{A_2^{-\frac{1}{2}}BA_1^{-\frac{1}{2}}A_1^{\frac{1}{2}} x}{A_2^{\frac{1}{2}}y}\\
    & \geq
    -\tfrac{c}2\norm{A_2^{-\frac{1}{2}}BA_1^{-\frac{1}{2}}}^2\norm{A_1^{\frac{1}{2}}x}^2 -
    \tfrac1{2c}\norm{A_2^{\frac{1}{2}}y}^2.
  \end{align*}
  Combining this with the assumption that
  $\norm{A_2^{-\frac{1}{2}}BA_1^{\frac{1}{2}}}<1$ we see that we can choose $c$ such
  that
  \[
  \begin{split}
    &\scp{
      \begin{bmatrix}
        x\\y
      \end{bmatrix}
    } {A
      \begin{bmatrix}
        x\\y
      \end{bmatrix}
    }\\
    &\geq (1 - c\norm{A_2^{-\frac{1}{2}}BA_1^{-\frac{1}{2}}}^2)\norm{A_1^{\frac{1}{2}}x}^2 +
    (1-\tfrac1c)\norm{A_2^{\frac{1}{2}}y}^2\\
    &>0
  \end{split}
  \]
  which proves the auxiliary statement.\qed
\end{proof}
We conclude that algorithm~(\ref{eq:inclusion-iterate}) converges as
long as one has $\norm{\Sigma^{-\frac{1}{2}}KT^{-\frac{1}{2}}} < 1$.  In order
to keep the proximal maps with respect to $G$ and $F^*$ feasible, the
maps $T$ and $\Sigma$ were restricted to diagonal matrices. However,
in recent work~\cite{BeckerFadili}, it was shown that some proximal
maps are still efficiently computable if $T$ and $\Sigma$ are the sum
of a diagonal matrix and a rank-one matrix.

Applying the preconditioning technique to the proposed inertial
primal-dual forward-backward algorithm~(\ref{eq:ipdfb}), we obtain the
method
\begin{equation}
  \label{eq:ipdfb-precond}
  \begin{cases}
    \xi^k  & = x^k + \alpha_k(x^k - x^{k-1})\\
    \zeta^k & = y^k + \alpha_k (y^k - y^{k-1})\\
    x^{k+1} & = (\id + T\partial G)^{-1}(\xi^k - T(\grad Q(\xi^k) + K^*\zeta^k))\\
    \bar\xi^{k+1} & = 2x^{k+1} - \xi^k\\
    y^{k+1} & =(\id + \Sigma\partial F^*)^{-1}(\zeta^k - \Sigma(\grad P^*(\zeta^k) - K\bar\xi^{k+1})).
  \end{cases}
\end{equation}
It turns out that the resulting method converges under appropriate
conditions.
\begin{theorem}
  \label{thm:convergence-saddle-point-preconditioning}
  In the setting of Theorem~\ref{thm:convergence-saddle-point} let
  furthermore $\nabla Q$ and $\nabla P^*$ be co-coercive w.r.t. the
  two bound, linear, symmetric and positive linear maps $D^{-1}$ and
  $E^{-1}$, respectively. If it holds that
  \begin{align}
    \Sigma^{-1} - \tfrac12 E &>0,\label{eq:sigma-E/2}\\
    T^{-1} - \tfrac12 D &> 0,\label{eq:T-D/2}\\
   \norm{(\Sigma^{-1} - \tfrac12 E)^{-\frac{1}{2}}K(T^{-1} - \tfrac12 D)^{-\frac{1}{2}}} &< 1\label{eq:pos-def1},
  \end{align}
  and that $\alpha_k$ is a nondecreasnig sequence in $[0,\alpha]$
  with $\alpha < 1$, and the iterates $(x^k,y^k)$
  of~(\ref{eq:ipdfb-precond}) fulfill
  \[
  \sum_{k=1}^\infty \alpha_k\norm{(x^k,y^k)-(x^{k-1},y^{k-1})}_M^2<\infty
  \]
  then $(x^k,y^k)$ converges weakly to a saddle point
  of~(\ref{eq:saddle-point-problem}). Furthermore, convergence is
  assured if there exists an $\varepsilon > 0$ such that for all
  $\alpha_k$ it holds that
  \begin{multline}
    \label{eq:pos-def2}
    (1-3\alpha_k-\varepsilon)\Sigma^{-1} \geq \tfrac{(1-\alpha_k)^2}{2}
    E,\\
    (1-3\alpha_k-\varepsilon)T^{-1} \geq \tfrac{(1-\alpha_k)^2}{2} D,\\
    \Big\|\Big((1-3\alpha_k-\varepsilon)\Sigma^{-1} -
    \tfrac{(1-\alpha_k)^2}{2}
    E\Big)^{-\frac{1}{2}}K\\\Big((1-3\alpha_k-\varepsilon)T^{-1} -
    \tfrac{(1-\alpha_k)^2}{2} D\Big)^{-\frac{1}{2}}\Big\| \leq \tfrac{1}{(1-3\alpha_k-\varepsilon)}\;.
  \end{multline}
\end{theorem}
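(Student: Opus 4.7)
The plan is to reduce this theorem to the general convergence results of Theorems~\ref{thm:intertial_fb_convergence} and~\ref{thm:intertial_fb_alphas}, specialized to the block operators $M$, $L$, $A$ and $B$ of the saddle-point setting with $\lambda_k\equiv 1$. Three checks have to be carried out: (a) co-coercivity of $B$ with respect to a suitable $L^{-1}$, (b) positive definiteness of $S = M - \tfrac12 L$, and (c) for the second (a-priori) statement, positive definiteness of $R_k-\varepsilon M$ from condition~\eqref{eq:cond-alpha}. Both (b) and (c) are positivity questions for block-structured operators with the same off-diagonal block $\pm K,\pm K^*$, so Lemma~\ref{lem:pos_def_block} will do the heavy lifting.

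First I would dispatch (a): since $\grad Q$ is co-coercive w.r.t. $D^{-1}$ and $\grad P^*$ w.r.t. $E^{-1}$, the block-diagonal $B$ is co-coercive with respect to $L^{-1}=\diag(D^{-1},E^{-1})$, exactly as in the proof of Theorem~\ref{thm:convergence-saddle-point}. With this $L$, $S = M - \tfrac12 L$ has diagonal blocks $T^{-1}-\tfrac12 D$ and $\Sigma^{-1}-\tfrac12 E$ and off-diagonal $-K^*,-K$. By assumptions~\eqref{eq:sigma-E/2}--\eqref{eq:pos-def1}, Lemma~\ref{lem:pos_def_block} yields $S>0$, so condition~(i) of Theorem~\ref{thm:intertial_fb_convergence} holds, and the first claim follows from that theorem together with the postulated summability of $\alpha_k\|(x^k,y^k)-(x^{k-1},y^{k-1})\|_M^2$.

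For the second part, I apply Theorem~\ref{thm:intertial_fb_alphas} with $\lambda_k\equiv 1$, whose hypothesis is $R_k=(1-3\alpha_k)M-\tfrac{(1-\alpha_k)^2}{2}L\geq \varepsilon M$. Writing $R_k-\varepsilon M$ in block form, the diagonal blocks are
\[
(1-3\alpha_k-\varepsilon)T^{-1}-\tfrac{(1-\alpha_k)^2}{2}D,\qquad (1-3\alpha_k-\varepsilon)\Sigma^{-1}-\tfrac{(1-\alpha_k)^2}{2}E,
\]
and the off-diagonal blocks are $\mp(1-3\alpha_k-\varepsilon)K^{*},\,\mp(1-3\alpha_k-\varepsilon)K$. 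The first two inequalities in~\eqref{eq:pos-def2} give positivity of the diagonal blocks; Lemma~\ref{lem:pos_def_block} then gives $R_k-\varepsilon M\geq 0$ provided that the norm of the off-diagonal operator, sandwiched between the inverse square roots of the diagonal blocks, is at most one. Since the off-diagonal block carries the factor $(1-3\alpha_k-\varepsilon)$, this norm condition is exactly the third inequality of~\eqref{eq:pos-def2}. Hence~\eqref{eq:cond-alpha} holds, the summability assumption~(ii) of Theorem~\ref{thm:intertial_fb_convergence} is granted by Theorem~\ref{thm:intertial_fb_alphas}, and weak convergence follows.

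I expect the routine but slightly delicate step to be the bookkeeping in (c): tracking how the scalar $(1-3\alpha_k-\varepsilon)$ factors out of the off-diagonal blocks while the diagonal blocks only carry it linearly (so the symmetric-sandwich in the lemma produces the reciprocal $1/(1-3\alpha_k-\varepsilon)$ on the right-hand side of the norm condition). Beyond that, everything reduces to invoking the two general theorems already established.
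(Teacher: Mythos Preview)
Your proposal is correct and follows essentially the same route as the paper: set $L=\diag(D,E)$, verify co-coercivity of $B$ with respect to $L^{-1}$, then use Lemma~\ref{lem:pos_def_block} twice---once to get $S=M-\tfrac12 L>0$ for Theorem~\ref{thm:intertial_fb_convergence}, and once to verify $(1-3\alpha_k-\varepsilon)M-\tfrac{(1-\alpha_k)^2}{2}L\geq 0$ for Theorem~\ref{thm:intertial_fb_alphas}. Your bookkeeping remark about the scalar $(1-3\alpha_k-\varepsilon)$ on the off-diagonal producing the reciprocal on the right-hand side of the norm bound is exactly the point, and matches the paper's argument.
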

\begin{proof}
  We start by setting
  \[
  C =
  \begin{bmatrix}
    D & 0\\
    0 & E
  \end{bmatrix}.
  \]
  and by Theorem~\ref{thm:intertial_fb_convergence} we only need to check if
  $S = M - \tfrac12 C$ is positive.
  Obviously, the diagonal blocks of $S$ are positive,
  by (\ref{eq:sigma-E/2}) and~(\ref{eq:T-D/2}).
  
  Now we use Lemma~\ref{lem:pos_def_block} to see
  that~(\ref{eq:sigma-E/2}),~(\ref{eq:T-D/2}) and~(\ref{eq:pos-def1})
  imply that $S$ is positive definite.  For the second claim, we
  employ Theorem~\ref{thm:intertial_fb_alphas} and only need to show
  that $R = (1 - 3\alpha_k)M - \tfrac{(1-\alpha_k)^2}{2}C\geq\epsilon M$
  which is equivalent to showing that
  \[
  (1-3\alpha_k-\epsilon)
  \begin{bmatrix}
    T^{-1} & -K^*\\
    -K & \Sigma^{-1}
  \end{bmatrix}
   - \tfrac{(1-\alpha_k)^2}{2}
   \begin{bmatrix}
     D & 0\\ 0 & E
   \end{bmatrix}
   \geq 0.
  \]
  Again using Lemma~\ref{lem:pos_def_block} we obtain that~(\ref{eq:pos-def2}) ensures this.\qed
\end{proof}

\subsection{Diagonal Preconditioning}
\label{sec:diagprecond}
In this section, we show how we can choose pointwise step sizes for
both the primal and the dual variables that will ensure the
convergence of the algorithm. The next result is an adaption of the
preconditioner proposed in~\cite{pock2011_precond}.

\begin{lemma} \label{lemma2} Assume that $\nabla Q$ and $\nabla P^*$
  are co-coercive with respect to diagonal matrices $D^{-1}$ and
  $E^{-1}$, where $D=\textup{diag}(d_1, \ldots, d_n)$ and
  $E=\textup{diag}(e_1, \ldots, e_n)$. Fix $\gamma,\delta \in (0,2)$, $r >
  0$, $s \in [0,2]$ and let $T = \textup{diag}(\tau_1,...\tau_n)$
  and $\Sigma = \textup{diag}(\sigma_1,...,\sigma_m)$ with
  \begin{equation}\label{eq:choice-tau-sigma}
    \tau_j = \frac{1}{\frac{d_j}{\gamma} + r\sum_{i=1}^m
      |K_{i,j}|^{2-s}}\;, \quad
    \sigma_i = \frac{1}{\frac{e_i}{\delta} + \tfrac1 r\sum_{j=1}^n |K_{i,j}|^{s}}
  \end{equation}
then it holds that
\begin{equation}\label{eq:diff-pos}
  \Sigma^{-1}-\tfrac{1}{2} E > 0\,, \quad T^{-1}-\tfrac{1}{2} D > 0\;,
\end{equation}
\begin{equation}\label{eq:ineq-precond}
  \norm{(\Sigma^{-1}-\tfrac{1}{2}E)^{-\frac{1}{2}}K (T^{-1}-\tfrac{1}{2}D)^{-\frac{1}{2}}} \leq 1\;.
\end{equation}
Furthermore, equation~(\ref{eq:pos-def2}) is fulfilled
if~(\ref{eq:alpha-bound}) is fulfilled.
\end{lemma}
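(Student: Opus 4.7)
The proof naturally splits into three parts matching the three claims. First I would verify the positivity in (\ref{eq:diff-pos}) by inspecting diagonal entries. Second I would reduce the operator-norm bound (\ref{eq:ineq-precond}) to a scalar inequality and attack it with a weighted Cauchy--Schwarz estimate exploiting the splitting $|K_{i,j}| = |K_{i,j}|^{s/2}\cdot|K_{i,j}|^{1-s/2}$. Third I would deduce (\ref{eq:pos-def2}) from (\ref{eq:alpha-bound}) by repeating the same scalar argument on appropriately rescaled diagonal maps.

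For the positivity, the $j$-th diagonal entry of $T^{-1} - \tfrac{1}{2}D$ equals $d_j(1/\gamma - 1/2) + r\sum_i |K_{i,j}|^{2-s}$, which is strictly positive since $\gamma \in (0,2)$, $r > 0$ and $d_j > 0$; the $\Sigma$-side is identical with $\delta$ in place of $\gamma$. For the norm bound, denote $\tilde T = (T^{-1}-\tfrac12 D)^{-1}$ and $\tilde\Sigma = (\Sigma^{-1}-\tfrac12 E)^{-1}$. Via the substitution $v = \tilde T^{1/2}u$, the inequality $\norm{(\Sigma^{-1}-\tfrac12 E)^{-1/2}K(T^{-1}-\tfrac12 D)^{-1/2}} \leq 1$ becomes
\[
\sum_i \tilde\sigma_i \, |(Kv)_i|^2 \;\leq\; \sum_j \tilde\tau_j^{-1}\, v_j^2 \qquad \forall v.
\]
The key step is the weighted Cauchy--Schwarz estimate $|(Kv)_i|^2 \leq \bigl(\sum_j |K_{i,j}|^s\bigr)\bigl(\sum_j |K_{i,j}|^{2-s} v_j^2\bigr)$. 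The definitions in (\ref{eq:choice-tau-sigma}) then give $\tilde\sigma_i \sum_j |K_{i,j}|^s \leq r$ and $r\sum_i |K_{i,j}|^{2-s} \leq \tilde\tau_j^{-1}$, and chaining these two inequalities after swapping the order of summation yields the claim.

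For the final claim, set $c = 1 - 3\alpha_k - \varepsilon$ and $d = (1-\alpha_k)^2/2$. By Remark~\ref{remark1} the bound (\ref{eq:alpha-bound}) is equivalent to $c \geq d\max(\gamma,\delta)$. The first two lines of (\ref{eq:pos-def2}) reduce diagonal by diagonal to $c\,\sigma_i^{-1} \geq d\, e_i$ and $c\,\tau_j^{-1} \geq d\, d_j$; discarding the $K$-dependent terms on the left and using $\max(\gamma,\delta)/\delta \geq 1$ (analogously for $\gamma$) closes both. For the norm line I repeat the Cauchy--Schwarz computation with $\hat T^{-1} = cT^{-1} - dD$ and $\hat\Sigma^{-1} = c\Sigma^{-1} - dE$; the analogues of the two key estimates become $\hat\sigma_i \sum_j |K_{i,j}|^s \leq r/c$ and $r\sum_i |K_{i,j}|^{2-s} \leq \hat\tau_j^{-1}/c$, and chaining them produces exactly the factor $1/c^2 = 1/(1-3\alpha_k-\varepsilon)^2$ that the norm condition in (\ref{eq:pos-def2}) demands.

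\textbf{Main obstacle.} The real work is bookkeeping: tracking $c$, $d$, $r$ and the exponents $s$ and $2-s$ through the Cauchy--Schwarz step so that each diagonal entry is exactly large enough to absorb the factor dropping out of the estimate. Once the splitting $|K_{i,j}| = |K_{i,j}|^{s/2}\cdot|K_{i,j}|^{1-s/2}$ is identified as the correct choice, the two halves of the argument (for (\ref{eq:ineq-precond}) and for the norm line of (\ref{eq:pos-def2})) are structurally identical and everything else is a direct calculation.
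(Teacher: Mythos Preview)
Your proposal is correct and follows essentially the same route as the paper: diagonal-entry checks for~(\ref{eq:diff-pos}), the weighted Cauchy--Schwarz estimate based on the factorisation $|K_{i,j}|=|K_{i,j}|^{s/2}|K_{i,j}|^{1-s/2}$ together with the defining identities in~(\ref{eq:choice-tau-sigma}) for~(\ref{eq:ineq-precond}), and then the same computation applied to the rescaled diagonals $cT^{-1}-dD$, $c\Sigma^{-1}-dE$ for the last claim. The only cosmetic difference is that the paper bounds $\norm{(\Sigma^{-1}-\tfrac12 E)^{-1/2}K(T^{-1}-\tfrac12 D)^{-1/2}x}^2$ directly rather than passing through your substitution $v=\tilde T^{1/2}u$, but the two calculations are line-for-line equivalent.
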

\begin{proof}
  The first two conditions follow from the fact that for diagonal
  matrices, the~\eqref{eq:diff-pos} can be written pointwise. By the
  definition of $\tau_j$, and $\sigma_i$ it follows that for any $s
  \in [0,2]$ and using the convention that $0^0 = 0$,
  \[
  \frac{1}{\tau_i}-\frac{d_i}{2} >
  \frac{1}{\tau_i}-\frac{d_i}{\gamma} = r\sum_{i=1}^m
  |K_{i,j}|^{2-s} \geq 0 \,,
  \]
  and
  \[
  \frac{1}{\sigma_i}-\frac{e_i}{2} >
  \frac{1}{\sigma_i}-\frac{e_i}{\delta} = \tfrac1 r\sum_{j=1}^n
  |K_{i,j}|^{s} \geq 0 \,.
  \]
  For the third condition, we have that for any $s \in [0,2]$
  \begin{align}
    &\norm{(\Sigma^{-1}-\tfrac{1}{2}E)^{-\frac{1}{2}}K (T^{-1}-\tfrac{1}{2}D)^{-\frac{1}{2}}x}^2\nonumber\\
    &=
    \sum_{i=1}^m \left(\sum_{j=1}^n  \frac{1}{\sqrt{\tfrac{1}{\sigma_i}-\tfrac{e_i}{2}}}K_{i,j}\frac{1}{\sqrt{\tfrac{1}{\tau_j}-\tfrac{d_j}{2}}}x_j\right)^2\nonumber\\
    &=\sum_{i=1}^m \frac{1}{\tfrac{1}{\sigma_i}-\tfrac{e_i}{2}} \left(\sum_{j=1}^n K_{i,j}\frac{1}{\sqrt{\tfrac{1}{\tau_j}-\tfrac{d_j}{2}}}x_j\right)^2\nonumber\\
    &<\sum_{i=1}^m \frac{1}{\tfrac{1}{\sigma_i}-\tfrac{e_i}{\delta}} \left(\sum_{j=1}^n |K_{i,j}|^{\frac{s}{2}}|K_{i,j}|^{1-\frac{s}{2}} \frac{1}{\sqrt{\tfrac{1}{\tau_j}-\tfrac{d_j}{\gamma}}} x_j\right)^2\nonumber\\
    &\leq \sum_{i=1}^m \frac{1}{\tfrac{1}{\sigma_i}-\tfrac{e_i}{\delta}} \left(\sum_{j=1}^n |K_{i,j}|^{s}\right)\left( \sum_{j=1}^n
      |K_{i,j}|^{2-s} \frac{1}{\tfrac{1}{\tau_j}-\tfrac{d_j}{\gamma}}  x_j^2\right)\;,\label{eq:est-ineq-precond}
  \end{align}
  where the second line follows from $K_{i,j} \leq |K_{i,j}|$ and
  $\gamma,\delta < 2$ and the last line follows from the
  Cauchy-Schwarz inequality.  By definition of $\sigma_i$ and
  $\tau_j$, and introducing $r > 0$, the above estimate can be
  simplified to
  \begin{align}
    &\sum_{i=1}^m \frac{1/r}{\tfrac{1}{\sigma_i}-\tfrac{e_i}{\delta}} \left(\sum_{j=1}^n |K_{i,j}|^{s}\right)\left( \sum_{j=1}^n
      |K_{i,j}|^{2-s} \frac{r}{\tfrac{1}{\tau_j}-\tfrac{d_j}{\gamma}}  x_j^2\right) \nonumber\\
    &= \sum_{i=1}^m \sum_{j=1}^n |K_{i,j}|^{2-s}\frac{r}{\tfrac{1}{\tau_j}-\tfrac{d_j}{\gamma}}  x_j^2\nonumber\\
    &=  \sum_{j=1}^n \left(\sum_{i=1}^m |K_{i,j}|^{2-s}\right) \frac{r}{\tfrac{1}{\tau_j}-\tfrac{d_j}{\gamma}} x_j^2 = \|x\|^2\;.\label{eq:est2-ineq-precond}
  \end{align}
  Using the above estimate in the definition of the operator norm, we
  obtain the desired result
  \begin{multline}
    \norm{(\Sigma^{-1}-\tfrac{1}{2}E)^{-\frac{1}{2}}K
      (T^{-1}-\tfrac{1}{2}D)^{-\frac{1}{2}}}^2\\
    = \sup_{x \not=
      0}\frac{\norm{(\Sigma^{-1}-\tfrac{1}{2}E)^{-\frac{1}{2}}K
        (T^{-1}-\tfrac{1}{2}D)^{-\frac{1}{2}}x}^2}{\|x\|^2} \leq 1\;.
  \end{multline}
  If we now assume that~(\ref{eq:alpha-bound}) is fulfilled, we
  especially obtain that
  \[
  \frac{1-3\alpha_k - \epsilon}\delta\geq \frac{(1-\alpha_k)^2}2\quad\text{and}\quad  \frac{1-3\alpha_k - \epsilon}\gamma\geq \frac{(1-\alpha_k)^2}2
  \]
  and consequently, by using the definition of $\tau_j$ and $\sigma_i$
  from~(\ref{eq:choice-tau-sigma}), that
  \begin{align*}
    \frac{1/r}{\tfrac{1-3\alpha_k-\epsilon}{\sigma_i} -
      \tfrac{(1-\alpha_k)^2}{2}e_i}\sum_j\abs{K_{ij}}^s
    & \leq\frac{1}{1-3\alpha_k-\epsilon}\\
    \text{and}\quad\frac{r}{\tfrac{1-3\alpha_k-\epsilon}{\tau_i}
      - \tfrac{(1-\alpha_k)^2}{2}d_i}\sum_i\abs{K_{ij}}^{2-s}
    & \leq\frac{1}{1-3\alpha_k-\epsilon}.
  \end{align*}
  Now one can use the same arguments as in
  inequalities~(\ref{eq:est-ineq-precond})
  and~(\ref{eq:est2-ineq-precond}) to derive that~(\ref{eq:pos-def2})
  is fulfilled.\qed
\end{proof}

\section{Numerical experiments}
\label{sec:numerics}

In this section, we provide several numerical results based on simple
convex image processing problems to investigate the numerical
properties of the proposed algorithm.

\subsection{TV-$\ell_2$ denoising}

Let us investigate the well-known total variation denoising model:
\begin{equation}\label{eq:rof}
\min_u \|\nabla u\|_{2,1} + \frac{\lambda}{2} \|u-f\|_2^2\;,
\end{equation}
where $f \in \R^{MN}$ is a given noisy image of size $M\times N$
pixels, $u\in\R^{MN}$ is the restored image, $\nabla \in \R^{2MN\times
  MN}$ is a sparse matrix implementing the discretized image gradient
based on simple forward differences. The operator norm of $\nabla$ is
computed as $\sqrt{8}$. The parameter $\lambda > 0$ is used to control
the trade-off between smoothness and data fidelity. For more
information we refer to~\cite{CP2010}. Figure~\ref{fig:rof} shows an
exemplary denoising result, where we used the noisy image on the left
hand side as input image and set $\lambda=10$.

\begin{figure*}[hbt]
  \begin{center}
  \subfigure[Noisy image]{\includegraphics[width=0.49\textwidth]{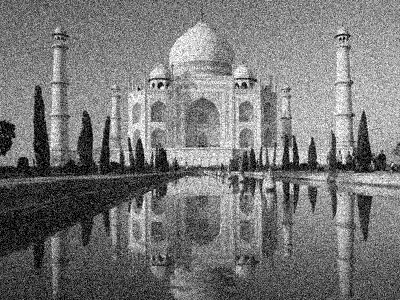}}\hfill
  \subfigure[Restored image]{\includegraphics[width=0.49\textwidth]{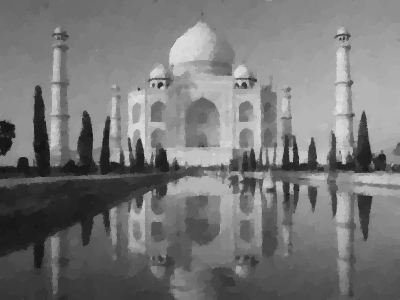}}
    \end{center}
    \caption{Application to total variation based image denoising with
      $\ell_2$ fitting term. (a) shows the noisy input image
      containing Gaussian noise with a standard deviation of
      $\sigma=0.1$, (b) shows the restored image using $\lambda=10$.}\label{fig:rof}
\end{figure*}
The dual problem associated to~\eqref{eq:rof} is given by the
following optimization problem
\begin{equation}\label{eq:rof-dual}
\min_p \frac{1}{2}\|\lambda f-\nabla^Tp\|_2^2 + I_P(p)\;,
\end{equation}
where $p \in \R^{2MN}$ is the dual variable and $I_P$ is the indicator
function for the set $P = \{p\in \R^{2MN}: \|p\|_{2,\infty} \leq
1\}$. This problem can easily cast into the problem
class~\eqref{eq:saddle-point-problem}, by setting $Q(p) =
\frac{1}{2}\|\lambda f - \nabla^Tp\|^2_2$, $G=I_P(p)$, and
$K=F^*=P^*=0$.

In our first experiment of this section, we study the behavior of the
error $e_k=\alpha_k\|x_k-x_{k-1}\|_2^2$, which plays a central role in
showing convergence of the algorithm. Figure~\ref{fig:fista-error}
shows the convergence of the sequence $\{e_k\}$ generated by the FISTA
algorithm by additionally using~\eqref{eq:saveguard} for different
choices of the constant $c$. The left figure depicts a case where $c$
is not chosen large enough and hence the save guard shrinks the
extrapolation factor $\alpha_k$ such that the error $e_k$ still
converges with rate $1/k^2$. The right figure shows a case where $c$
is chosen sufficiently large and hence the save guard does not
apply. In this case, the algorithm produces the same sequence of
iterates as the original FISTA algorithm. From our numerical results,
it seems that the asymptotic convergence of $e_k$ is actually faster
that $1/k^2$ which suggest that the iterates of FISTA are indeed
convergent.

\begin{figure*}[ht!]
  \begin{center}
  \subfigure[$c=10^4$]{\includegraphics[width=0.4\textwidth]{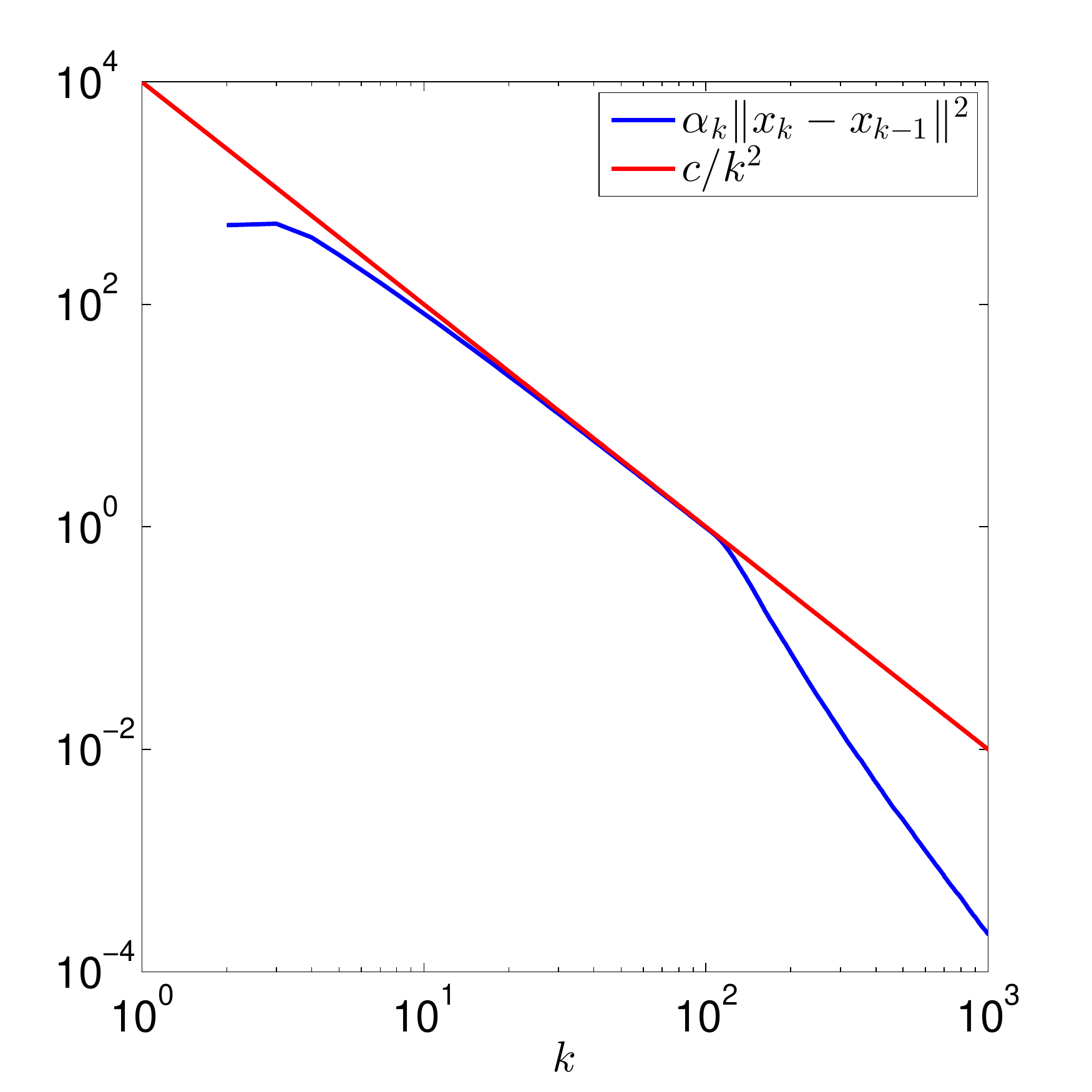}}\hfill
  \subfigure[$c=10^5$]{\includegraphics[width=0.4\textwidth]{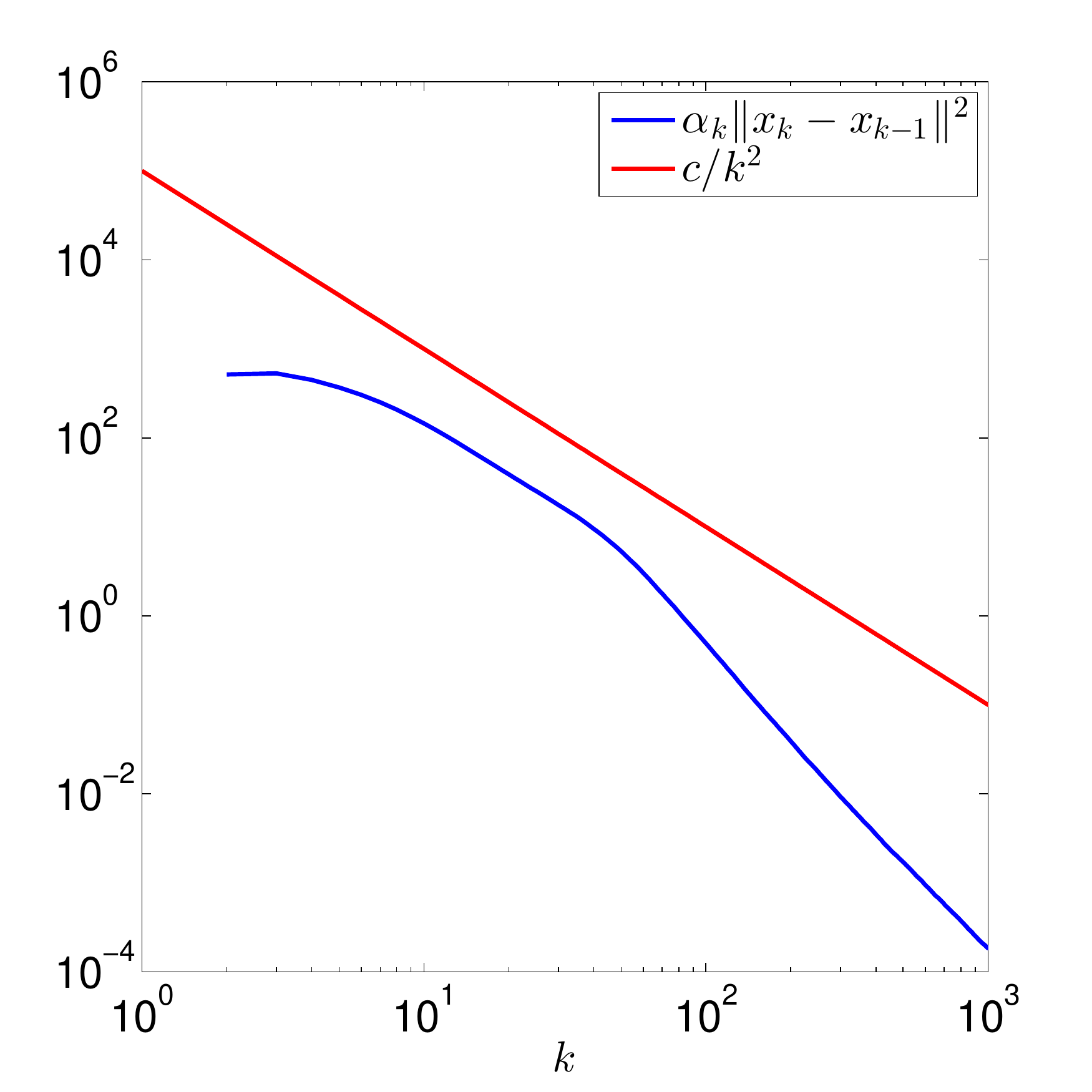}}
\end{center}
\caption{Convergence of the error sequence in the FISTA
  algorithm.}\label{fig:fista-error}
\end{figure*}

In the second experiment we consider a saddle-point formulation
of~\eqref{eq:rof}
\begin{equation}\label{eq:rof-saddle}
  \min_u\max_p \, \scp{\nabla u}{p} + \frac{\lambda}{2} \|u-f\|_2^2 - I_P(p)\;,
\end{equation}
Casting this problem in the general
from~\eqref{eq:saddle-point-problem}, the most simple choice is $K =
\nabla$, $F^*(p) = I_P(p)$, $G(u) = \|u-f\|_2^2$, $Q=P^*=0$. Hence,
algorithm~\ref{eq:ipdfb} reduces to an inertial variant of the
primal-dual algorithm of~\cite{CP2010}. According
to~\eqref{eq:ipdfb-conv1} the step sizes $\tau$ and $\sigma$ need to
satisfy $\tau\sigma < 1/\|K\|^2$, but the ratio $\tau/\sigma$ can be
chosen arbitrarily.

Figure~\ref{fig:primal-dual} shows the convergence of the primal dual
gap for different choices for $\alpha_k$ and the ratio
$\tau/\sigma$. In general, one can see that the convergence becomes
faster for larger values of $\alpha_k$. According
to~\eqref{eq:ipdfb-conv3}, we can guarantee convergence for $\alpha_k
< 1/3$ but we cannot guarantee convergence for larger values of
$\alpha_k$. In fact, it turns out that the feasible range of
$\alpha_k$ depends on the ratio $\tau/\sigma$. For $\tau/\sigma=0.1$,
fastest convergence is obtained by choosing $\alpha_k$ dynamically as
$\alpha_k=(k-1)/(k+2) \rightarrow 1$. In this case, the primal-dual
shows a very similar performance to the FISTA algorithm. For
$\tau/\sigma=0.01$, the algorithm converges for up to $\alpha_k=1/2$,
but diverges for the dynamic choice. This behavior can be explained by
the fact that the ratio $\tau/\sigma$ directly influences the
$M$-metric~\eqref{eq:primal-dual-norm} which in turn leads to a
divergence of the error term $\sum_{k=1}^\infty
\alpha_k\norm{x^k-x^{k-1}}_M^2$.

Next, we provide an experiment, where we compare the effect of the
inertial force with the effect of overrelaxation that has already been
considered
in~\cite{Condat2013,Vu2013}. Figure~\ref{fig:inertial_vs_overrelaxation}
compares the primal-dual gap of the plain primal-dual
(i.e. $\alpha_k=0$) algorithm~\cite{CP2010} with the performance of
its variants using either inertial forces using $\alpha_k=1/2$ or
overrelaxation (see~\eqref{eq:overrelaxation}) using $\rho_k=1.9$. For
all methods we used $\tau/\sigma=0.01$. Both variants improve the
convergence of the plain primal-dual algorithm but we observed that
overrelaxation leads to some numerical oscillations, in particular for
values of $\rho_k$ close to $2$.

\begin{figure*}[ht!]
  \begin{center}
  \subfigure[$\tau/\sigma=0.1$]{\includegraphics[width=0.4\textwidth]{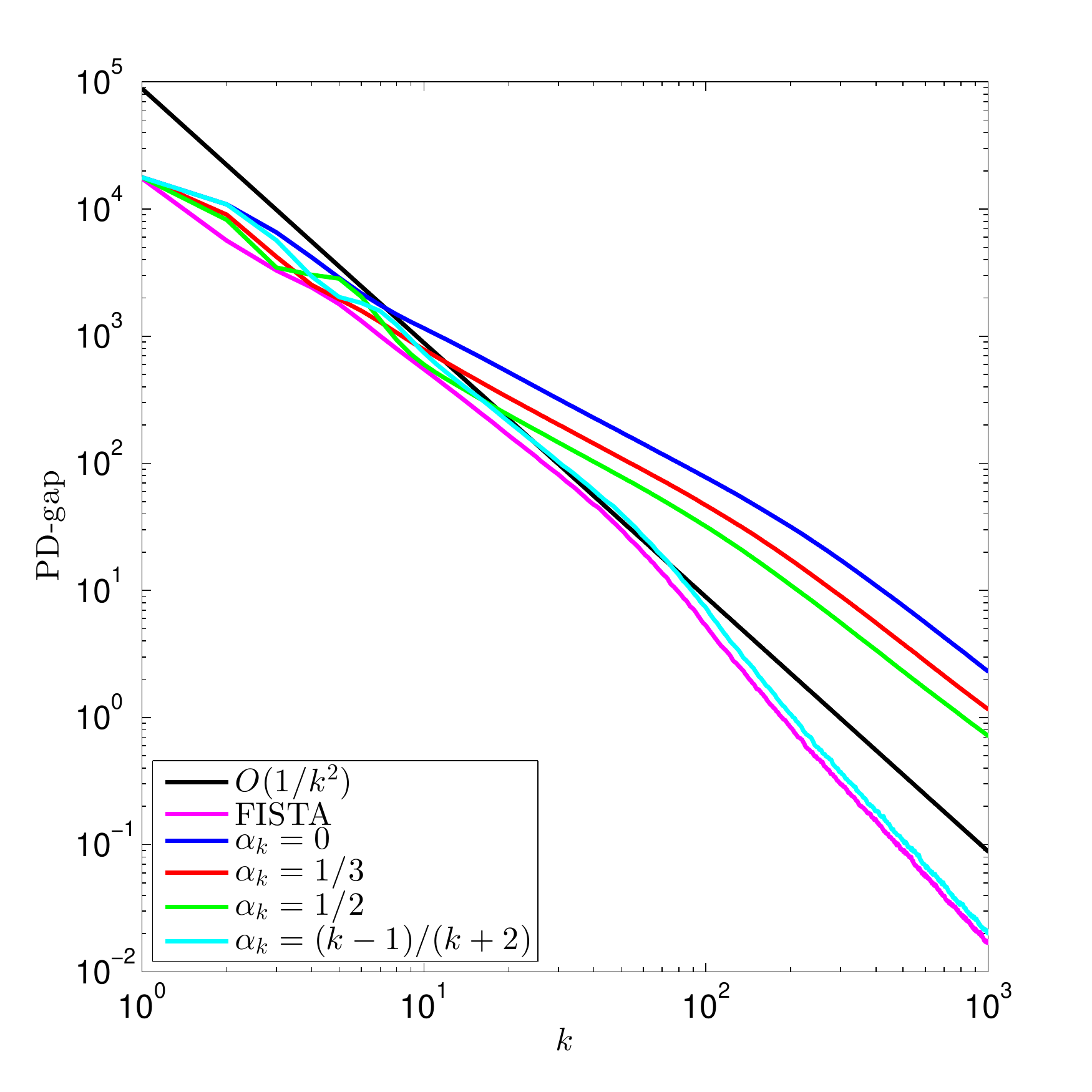}}\hfill
  \subfigure[$\tau/\sigma=0.01$]{\includegraphics[width=0.4\textwidth]{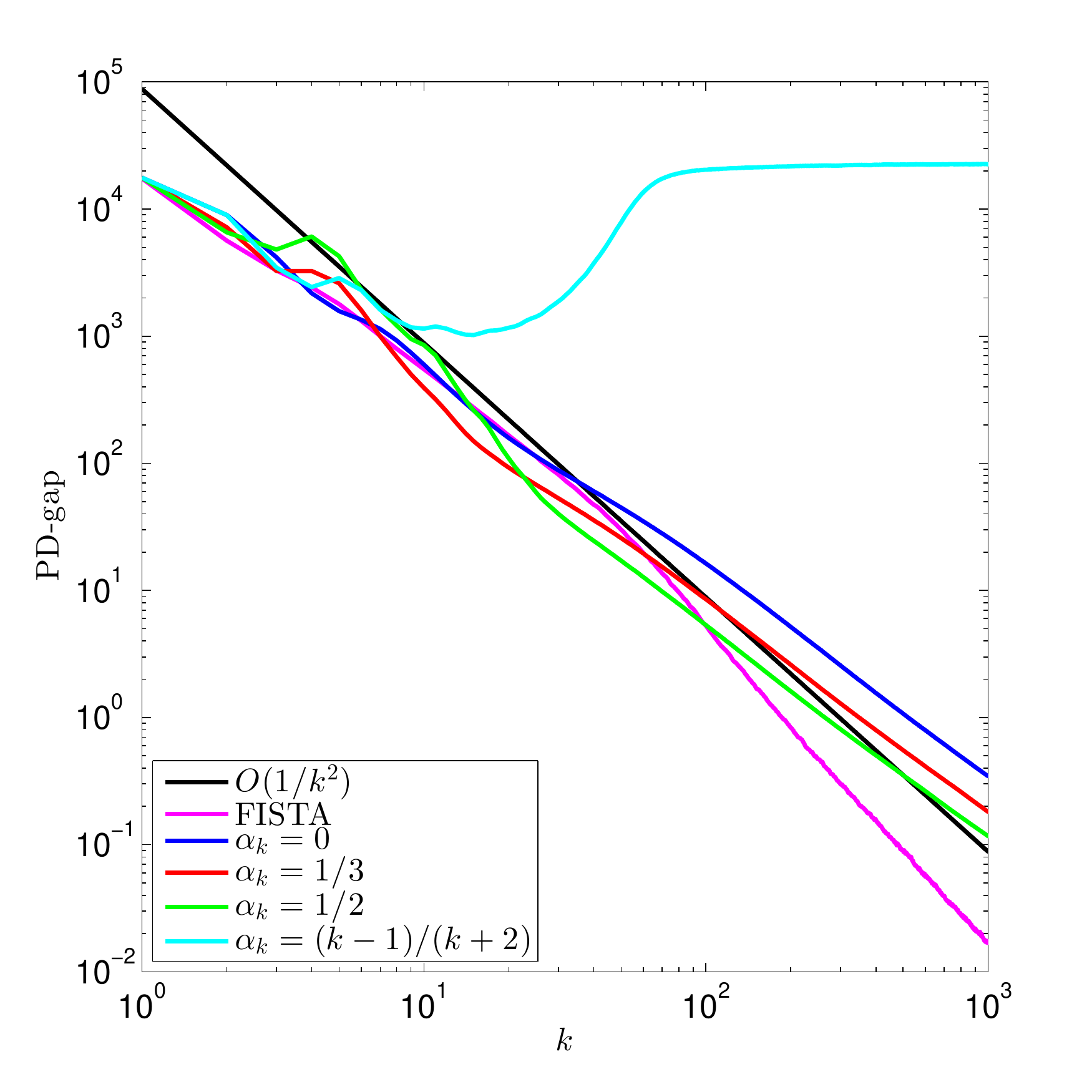}}
\end{center}
\caption{Convergence of the inertial primal-dual forward-backward
  algorithm~\eqref{eq:ipdfb} for different choices of
  $\tau/\sigma$ and $\alpha_k$.}\label{fig:primal-dual}
\end{figure*}

\begin{figure}
  \begin{center}
    \includegraphics[width=0.4\textwidth]{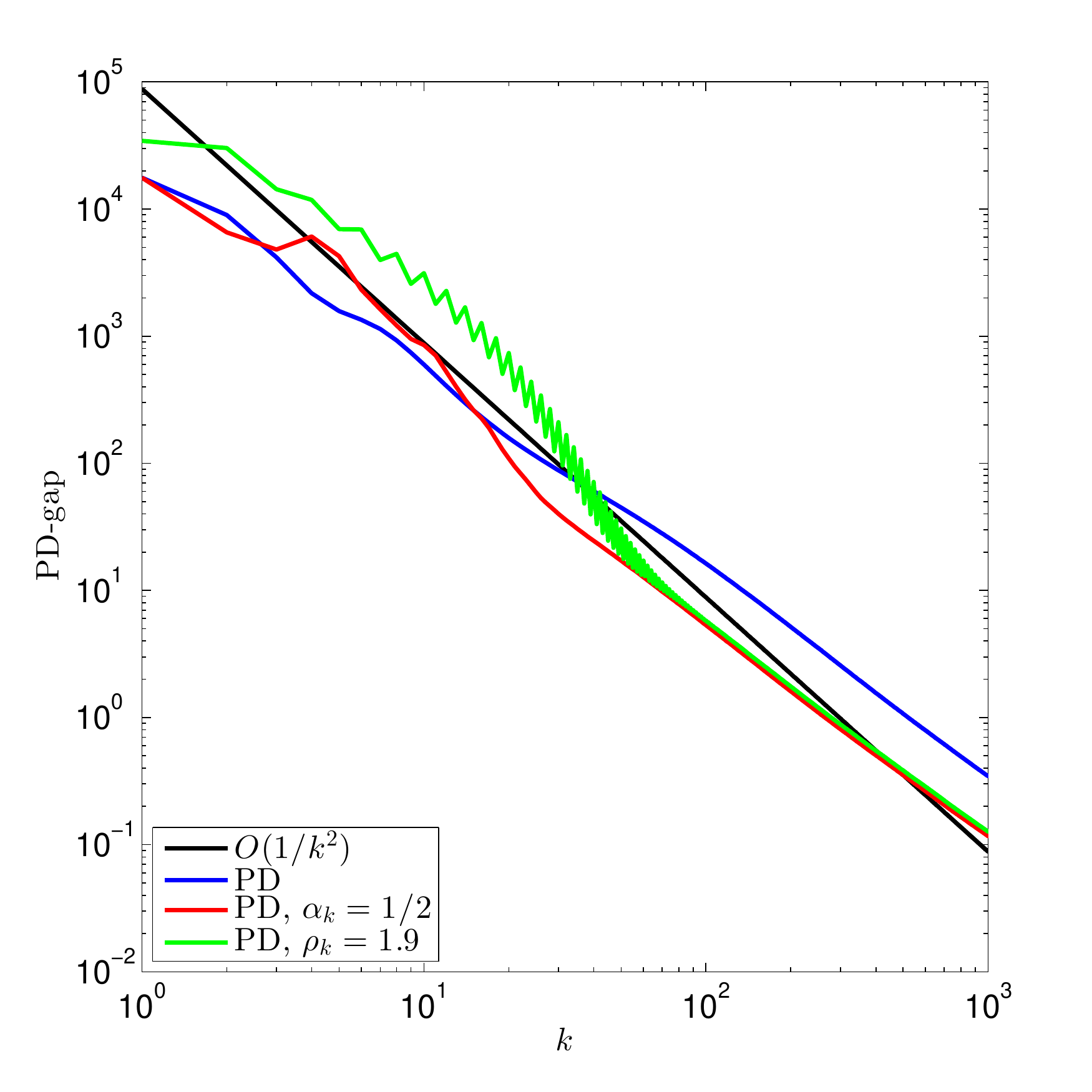}
  \end{center}
  \caption{Comparison between inertial forces and overrelaxation. Both
    techniques show similar performance improvement but overrelaxation
    appears numerical less stable.}\label{fig:inertial_vs_overrelaxation}
\end{figure}

\subsection{TV-$\ell_2$ deconvolution}

Our next example incorporates an additional linear operator
in the data fidelity. The problem is given by
\begin{equation}\label{eq:rof-deconv}
\min_u \|\nabla u\|_{2,1} + \frac{\lambda}{2} \|Hu-f\|_2^2\;,
\end{equation}
where $H \in \R^{MN\times MN}$ is a linear operator, for example $H$
can be such that $Hu$ is equivalent to the 2D convolution $h * u$,
where $h$ is a convolution kernel.
\begin{figure*}
  \begin{center}
  \subfigure[Noisy and blurry image]{\includegraphics[width=0.49\textwidth]{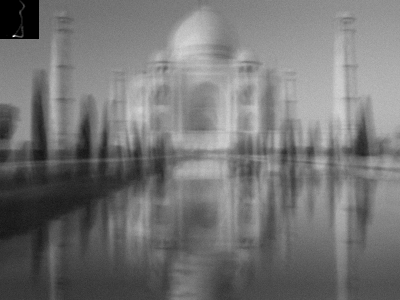}}\hfill
  \subfigure[Restored image]{\includegraphics[width=0.49\textwidth]{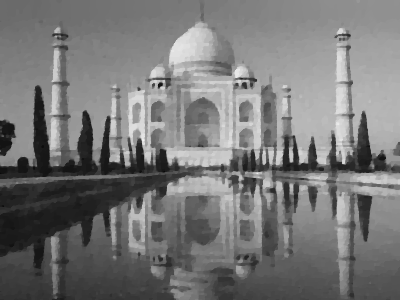}}
    \end{center}
    \caption{Application to total variation based image deconvolution
      with $\ell_2$ fitting term. (a) shows the noisy ($\sigma=0.01)$
      and blurry input image together with the known point spread
      function, and (b) shows the restored image using
      $\lambda=1000$.}\label{fig:deconv_l2}
\end{figure*}
We again consider a saddle-point formulation
\begin{equation}\label{eq:rof-deconv-saddle}
  \min_u\max_p \, \scp{\nabla u}{p} + \frac{\lambda}{2} \|Hu-f\|_2^2 - I_P(p)\;.
\end{equation}
Casting this problem into the general class of
problems~\eqref{eq:saddle-point-problem}, one has different
possibilities. If we would choose $K=\nabla$, $F^*(p)=I_P(p)$,
$G(u)=\frac{\lambda}{2} \|Hu-f\|_2^2$, $Q,P^*=0$, we would have to
compute the proximal map with respect to $G$ in each iteration of the
algorithm, which can be computationally very expensive. Instead, if we
choose $G=0$, $Q(u)=\frac{\lambda}{2} \|Hu-f\|_2^2$, we only need to
compute $\nabla Q(u)= \lambda H^T(Hu-f)$ which is obviously much
cheaper. We call this variant the \textit{explicit}
variant. Alternatively, we can additionally dualize the data term,
which leads to the extended saddle-point problem
\[
\min_u\max_{p,q} \, \scp{\nabla u}{p} + \scp{Hu}{q} + \scp{f}{q} -
\frac{1}{2\lambda} \|q\|_2^2 - I_P(p)\;.
\]
where $q \in \R^{MN}$ is the new dual variable vector.  Casting now
this problem into~\eqref{eq:saddle-point-problem}, we identify
$K=\begin{pmatrix} \nabla\\ H \end{pmatrix}$, $G=0$, $Q=0$, $F^*(p,q)
= I_P(p) + \frac{1}{2\lambda}\|q\|_2^2 - \scp{f}{q}$, which eventually
leads to proximal maps that are easy to compute. We call this variant the
\textit{split-dual} variant.

Figure~\ref{fig:deconv_l2_comparison} shows a comparison of the
convergence between the explicit and the split-dual variants with and
without inertial forces. For the explicit variant, the maximal value
of the inertial force was computed using
formula~\eqref{eq:alpha-bound}, where we set $L_K=\sqrt{8}$,
$L_Q=\lambda$, $\gamma=1$ and $r=100$. This results in a theoretically
maximal value of $\alpha_k=0.236$ but the algorithm also
converges for $\alpha_k=1/3$ (see Remark~\ref{remark:implicit}). For the
split-dual variant, the formulation does not involve any explicit
terms and hence the maximal feasible value for $\alpha_k$ is
$1/3$. The primal and dual step sizes were computed according to the
preconditioning rules~\eqref{eq:choice-tau-sigma} (skipping the
explicit terms), where we again used $r=100$.

The figure shows the primal energy gap, where the optimal primal
energy value has been computed by running the explicit variant for
$10000$ iterations. The algorithms were stopped, after the primal
energy-gap was below a threshold of $10^{-2}$. From the figure, one
can see that for both variants, the inertial force leads to a faster
convergence. One can also see that in the early stage of the
iterations, the explicit variant seems to converge faster than the
split-dual variant. Finally, we point out that the asymptotic
convergence of both variants is considerably faster than
$\mathcal{O}(1/k^2)$.

\begin{figure}
  \begin{center}
    \includegraphics[width=0.45\textwidth]{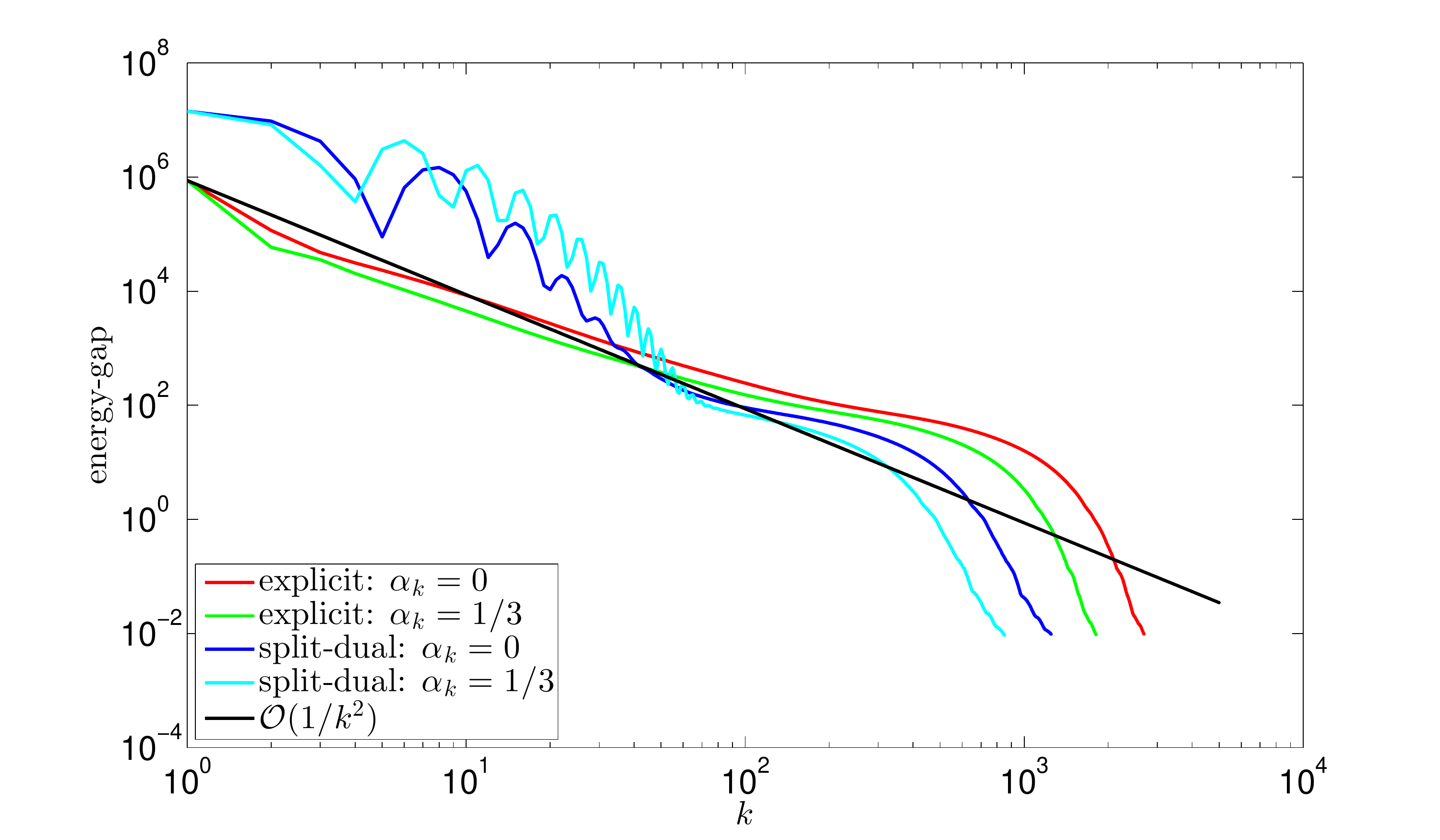}
  \end{center}
  \caption{Convergence of the primal dual algorithms with and without
    inertial forces.}\label{fig:deconv_l2_comparison}
\end{figure}

\section{Conclusion}
\label{sec:conclusion}

In this paper we considered an inertial forward-backward algorithm for
solving monotone inclusions given by the sum of a monotone operator
with an easy-to-compute resolvent operator and another monotone
operator which is co-coercive. We have proven convergence of the
algorithm in a general Hilbert space setting. It turns out that the
proposed algorithm generalizes several recently proposed algorithms
for example the FISTA algorithm of Beck and
Teboulle~\cite{BeckTeboulle2008} and the primal-dual algorithm of
Chambolle and Pock~\cite{CP2010}. This gives rise to new inertial
primal-dual algorithms for convex-concave programming. In several
numerical experiments we demonstrated that the inertial term leads to
faster convergence while keeping the complexity of each iteration
basically unchanged.

Future work will mainly concentrate on trying to find worst-case
convergence rates for particular problem classes.

\section*{Acknowledgements}
Thomas Pock acknowledges support from the Austrian science fund (FWF)
under the project "Efficient algorithms for nonsmooth optimization in
imaging", No. I1148 and the FWF-START project Bilevel optimization for
Computer Vision, No. Y729. The authors wish to thank Antonin Chambolle
for very helpful discussions.

\bibliographystyle{plain}
\bibliography{iPDFB}

\end{document}